\definecolor{citecolor}{RGB}{20, 150, 200}
\definecolor{linkcolor}{RGB}{80, 160, 100}
\newcommand*{\tran}{^{\mkern-1.5mu\mathsf{T}}}
\newcommand{\mbb}[1]{\mathbb{#1}}
\newcolumntype{C}[1]{>{\centering\arraybackslash}m{#1}}
\newcolumntype{P}[1]{>{\centering\arraybackslash}p{#1}}
\newtheorem{lemma}{Lemma}
\newtheorem{definition}{Definition}
\newcommand{\be}{\begin{equation}}
\newcommand{\ee}{\end{equation}}
\definecolor{Gray}{gray}{0.85}
\definecolor{LightCyan}{rgb}{0.88,1,1}
\def\@onedot{\ifx\@let@token.\else.\null\fi\xspace}
\DeclareRobustCommand\onedot{\futurelet\@let@token\@onedot}
\definecolor{blue1}{RGB}{0,128,255}
\definecolor{blue3}{RGB}{0,0,128}
\definecolor{darkpastelgreen}{rgb}{0.01, 0.75, 0.24}
\definecolor{cerulean}{rgb}{0.0, 0.48, 0.65}
\def\eg{\emph{e.g}\onedot}
\def\ie{\emph{i.e}\onedot}
\def\Figref#1{Figure~\ref{#1}}
\def\eqref#1{equation~\ref{#1}}
\def\Eqref#1{Equation~\ref{#1}}
\def\1{\bm{1}}
\def\eps{{\epsilon}}
\def\rvx{{\mathbf{x}}}
\def\vtheta{{\bm{\theta}}}
\def\mH{{\bm{H}}}
\def\mJ{{\bm{J}}}
\DeclareMathAlphabet{\mathsfit}{\encodingdefault}{\sfdefault}{m}{sl}
\SetMathAlphabet{\mathsfit}{bold}{\encodingdefault}{\sfdefault}{bx}{n}
\newcommand{\tens}[1]{\bm{\mathsfit{#1}}}
\def\gL{{\mathcal{L}}}
\newcommand{\etens}[1]{\mathsfit{#1}}
\newcommand{\var}[1]{\ensuremath{\boldsymbol{#1}}\xspace}
\newcommand{\I}{\ensuremath{\mathbf{I}}\xspace}
\newcommand{\x}{\ensuremath{\mathbf{x}}\xspace}
\newcommand{\xt}{\ensuremath{{\tilde{\mathbf{x}}}}\xspace}
\newcommand{\z}{\ensuremath{\mathbf{z}}\xspace}
\newcommand{\Jacobian}{\ensuremath{\mathbf{J}}\xspace}
\newcommand{\expec}{\ensuremath{\mathbb{E}}\xspace}
\DeclareMathOperator*{\argmin}{arg\,min}
\title{Estimating High Order Gradients of the \\Data Distribution by Denoising}
\author{%
  Chenlin Meng\\
  Stanford University\\
  \texttt{chenlin@cs.stanford.edu} \\
   \And
   Yang Song\\
   Stanford University\\
  \texttt{yangsong@cs.stanford.edu} \\
   \AND
   Wenzhe Li\\
   Tsinghua University\\
  \texttt{lwz21@mails.tsinghua.edu.cn} \\
   \And
   Stefano Ermon\\
   Stanford University\\
  \texttt{ermon@cs.stanford.edu} \\
}
\begin{document}
\def\method {$D_2$SM}
\def\jointmethod {$\mathcal{L}_{\text{joint}}$}

\maketitle
\begin{abstract}

The first order derivative of a data density can be estimated efficiently by denoising score matching, and has become an important component in many applications, such as image generation and audio synthesis. Higher order derivatives provide additional local information about the data distribution and enable new applications. Although they can be estimated via automatic differentiation of a learned density model, this can amplify estimation errors and is expensive in high dimensional settings. To overcome these limitations, we propose a method to directly estimate high order derivatives (scores) of a data density from samples. We first show that denoising score matching can be interpreted as a particular case of Tweedie’s formula. By leveraging Tweedie’s formula on higher order moments, we generalize denoising score matching to estimate higher order derivatives. We demonstrate empirically that models trained with the proposed method can approximate second order derivatives more efficiently and accurately than via automatic differentiation. We show that our models can be used to quantify uncertainty in denoising and to improve the mixing speed of Langevin dynamics via Ozaki discretization for  sampling synthetic data and natural images.

\end{abstract}
\section{Introduction}
The first order derivative of the log data density function, also known as \emph{score}, has 
found 
many applications including image generation~\cite{song2019generative,song2020improved,ho2020denoising}, image denoising~\cite{saremi2018deep,saremi2019neural} and audio synthesis~\cite{kong2020diffwave}. 
Denoising score matching (DSM)~\cite{vincent2011connection} provides an efficient way to estimate the 
\emph{score} of the data density from samples and has been widely used for training score-based generative models~\cite{song2019generative,song2020improved} and denoising~\cite{saremi2018deep,saremi2019neural}.
High order derivatives of 
the data density,
which we refer to as \emph{high order scores}, provide a
more accurate local approximation of 
the data density (\eg, its curvature) and enable new applications.
For instance, high order scores can improve the mixing speed for certain sampling methods~\cite{dalalyan2019user,sabanis2019higher,mou2019high}, similar to how high order derivatives accelerate gradient descent in optimization~\cite{martens2015optimizing}. %
In denoising problems, given a noisy datapoint, high order scores can be used to compute high order moments of the underlying noise-free datapoint, thus providing a way to quantify the uncertainty in denoising.

Existing methods for score estimation~\cite{hyvarinen2005estimation, vincent2011connection, song2019sliced, zhou2020nonparametric}, such as denoising score matching~\cite{vincent2011connection}, focus on estimating the \emph{first order} score (\ie, the Jacobian of the log density). In principle, high order scores can be estimated from a 
learned first order score model (or even a density model) via automatic differentiation. However, this approach is computationally expensive for high dimensional data 
and score models parameterized by deep neural networks. For example, given a $D$ dimensional distribution, computing the $(n+1)$-th order score value from an existing $n$-th order score model by automatic differentiation is on the order of $D$ times more expensive than evaluating the latter~\cite{song2019sliced}. 
Moreover, %
computing higher-order scores by automatic differentiation might suffer from large estimation error, since
a small training loss for the first order score does not always lead to a small estimation error for high order scores. 

To overcome these limitations, we propose a new approach which directly models and estimates high order scores of a data density from samples. We draw inspiration from Tweedie's formula~\cite{efron2011tweedie,robbins2020empirical}, which connects the score function to a denoising problem, and show that denoising score matching (DSM) with Gaussian noise perturbation can be derived from 
Tweedie's formula with the knowledge of least squares regression.
We then provide a generalized version of Tweedie's formula 
which allows us to further extend denoising score matching to estimate high order scores. In addition, we provide variance reduction techniques to improve the optimization of these newly introduced high order score estimation objectives.
With our approach, we can directly parameterize high order scores and learn them efficiently, sidestepping expensive automatic differentiation.

While our theory and estimation method is applicable to scores of any order, we focus on the \emph{second order score} (\ie, the Hessian of the log density) for empirical evaluation.
Our experiments show that models learned with the proposed objective can approximate second order scores
more accurately than applying automatic differentiation to lower order score models. %
Our approach is also more computationally efficient for high dimensional data, %
achieving up to $500\times$ speedups for second order score estimation on MNIST.
In denoising problems, 
there could be multiple clean datapoints consistent with a noisy observation, and it is often desirable to measure the uncertainty of denoising results. %
As second order scores are closely related to the covaraince matrix of the noise-free data conditioned on the noisy observation, we show that our estimated second order scores can 
provide extra insights into the solution of denoising problems by capturing and quantifying the uncertainty of denoising.
We further show that our model can be used to improve the mixing speed of Langevin dynamics for sampling synthetic data and natural images.
Our empirical results on second order scores, a special case of the general approach, demonstrate the potential and applications of our method for estimating high order scores.

\section{Background}

\subsection{Scores of a distribution}

\begin{definition}
Given a probability density $p(\x)$ over $\mathbb{R}^D$, 
we define the $k$-th order score $\tens{s}_{k}(\x):\mathbb{R}^D\to \otimes^{k}\mbb{R}^D$, where $\otimes^{k}$ denotes $k$-fold tensor multiplications, to be a tensor with the $(i_1, i_2, \dots, i_k)$-th index given by $[\etens{s}_{k}(\x)]_{i_1 i_2 \dots i_k} \triangleq \frac{\partial^k}{\partial x_{i_1} \partial x_{i_2} \cdots \partial x_{i_k}} \log p(\rvx)$, where $(i_1, i_2, \dots, i_k) \in \{1, \cdots, D\}^k$.
\end{definition}
As an example, when $k=1$, the \emph{first order score} is the gradient of $\log p(\x)$ w.r.t. to $\x$, defined as $\tens{s}_{1}(\x)\triangleq\nabla_{\x}\log p(\x)$. Intuitively, this is a vector field of the steepest ascent directions for the log-density. Note that the definition of \emph{first order score} matches the definition of (Stein) \emph{score}~\cite{hyvarinen2005estimation}. When $k=2$, the \emph{second order score} is the Hessian of $\log p(\x)$ w.r.t. to $\x$. It gives the curvature of a density function, and with $\tens{s}_{1}(\x)$ it can provide a better local approximation to $\log p(\x)$. 

\subsection{Denoising score matching}
Given a data distribution $p_{\text{data}}(\x)$ and a model distribution $p(\x;\vtheta)$, the \emph{score} functions of $p_{\text{data}}(\x)$ and $p(\x;\vtheta)$ are defined as $\tens{s}_{1}(\x)\triangleq\nabla_\x \log p_{\text{data}}(\x)$ and $\tens{s}_{1}(\x;\vtheta)\triangleq\nabla_\x \log p(\x;\vtheta)$ respectively. 
Denoising score matching (DSM)~\cite{vincent2011connection} 
perturbs a data sample $\x\sim p_{\text{data}}(\x)$ with a pre-specified noise distribution $q_\sigma(\xt \mid \x)$ and then estimates the \emph{score} of the perturbed data distribution $q_\sigma(\xt) = \int q_\sigma(\xt \mid \x)p_{\text{data}}(\x) d\x$ which we denote $\tilde{\tens{s}}_{1}(\xt)\triangleq \nabla_{\xt} \log q_{\sigma}(\xt)$. DSM uses  the following objective
\begin{equation}
\label{eq: denoising score matching objective}
    \frac{1}{2}\expec_{ p_{\text{data}}(\x)}\expec_{q_\sigma(\xt \mid \x)}[\Vert  \tilde{\tens{s}}_{1}(\xt;\vtheta) - \nabla_\xt \log q_\sigma(\xt \mid \x) \Vert_2^2].
\end{equation}
It is shown that under certain regularity conditions, minimizing \cref{eq: denoising score matching objective} is equivalent to minimizing the score matching~\cite{hyvarinen2005estimation} loss between $\tilde{\tens{s}}_{1}(\xt;\vtheta)$ and $\tilde{\tens{s}}_{1}(\xt)$~\cite{vincent2011connection} defined as
\begin{equation}
\label{eq:fisher_divergence}
	\frac{1}{2} \expec_{ p_{\text{data}}(\x)}\expec_{q_\sigma(\xt \mid \x)}[\Vert \tilde{\tens{s}}_{1}(\xt;\vtheta)-\tilde{\tens{s}}_{1}(\xt) \Vert_2^2 ].
\end{equation}
When $q_\sigma(\xt \mid \x)=\mathcal{N}(\tilde \x|\x,\sigma^2 I)$ 
, the objective becomes
\begin{equation}
\label{eq:dsm_gaussian}
     \gL_{\text{DSM}}(\vtheta)=\frac{1}{2}\expec_{p_{\text{data}}(\x)}\expec_{q_\sigma(\xt \mid \x)}\bigg[\Big\Vert  \tilde{\tens{s}}_{1}(\xt;\vtheta) + \frac{1}{\sigma^2}(\xt-\x) \Big\Vert_2^2\bigg].
\end{equation}
Optimizing \cref{eq:dsm_gaussian} can, intuitively, be understood as predicting $\frac{\xt-\x}{\sigma^2}$, the added ``noise" up to a constant, given the noisy input $\xt$, and is thus related to denoising.
Estimating the score  
of the noise perturbed distribution $q_{\sigma}(\xt)$ instead of the original (clean) data distribution $p_{\text{data}}(\x)$ allows DSM to approximate scores more efficiently than other methods~\cite{hyvarinen2005estimation,song2019sliced}. 
When $\sigma$ is close to zero, $q_{\sigma}(\xt) \approx p_{\text{data}}(\x)$ so the score of $q_{\sigma}(\xt)$ estimated by DSM will be close to that of $p_{\text{data}}(\x)$.
When $\sigma$ is large, the estimated score for $q_{\sigma}(\xt)$ plays a crucial role in denoising~\cite{saremi2018deep} and learning score-based generative models~\cite{song2019generative,song2020improved}.

\subsection{Tweedie's formula}
Given a prior density $p_{\text{data}}(\x)$, a noise distribution $q_{\sigma}(\tilde \x|\x)=\mathcal{N}(\tilde \x|\x,\sigma^2 I)$, and the noisy density $q_{\sigma}(\tilde \x)=\int p_{\text{data}}(\x)q_{\sigma}(\tilde \x|\x)d\x$,
Tweedie's formula~\cite{robbins2020empirical,efron2011tweedie} provides a close-form expression for the posterior expectation (the first moment) of $\x$ conditioned on $\tilde{\x}$:
 \begin{equation}
    \label{eq:twiddie_formula}
    \expec[\x\mid \xt] = \xt + \sigma^2 \tilde{\tens{s}}_{1}(\xt),
\end{equation}
where $\tilde{\tens{s}}_{1}(\xt)\triangleq \nabla_{\xt} \log q_{\sigma}(\xt)$.
\Eqref{eq:twiddie_formula} implies that given a ``noisy'' observation $\xt\sim q_{\sigma}(\tilde \x)$, one can compute the expectation of the ``clean'' datapoint $\x$ that may have produced $\xt$. As a result, \Eqref{eq:twiddie_formula} has become an important tool for denoising~\cite{saremi2019neural, saremi2018deep}. We provide the proof in \cref{app:proof}.

A less widely known fact is that Tweedies' formula can be generalized to provide higher order moments of $\x$ given $\tilde{\x}$, which we will leverage to derive the objective for learning higher order scores.

\section{Estimating Higher Order Scores by Denoising}
Below we demonstrate that DSM can be derived from Tweedie's formula~\cite{efron2011tweedie,robbins2020empirical}. 
By leveraging the generalized Tweedie's formula on high order moments of the posterior, we extend DSM to estimate higher order score functions.

\subsection{DSM in the view of Tweedie's formula}\label{sec:tweedie_to_dsm}
The optimal solution to the least squares regression problem
\begin{equation}
\label{eq:least_square_dsm}
    \min_\vtheta \expec_{ p_{\text{data}}(\x)} \expec_{ q_{\sigma}(\tilde \x|\x)} [\Vert \tens{h}(\xt; \vtheta)-\x\Vert_2^2]
\end{equation}
is well-known to be the conditional expectation $\tens{h}(\xt; \vtheta^\ast)=\expec[\x \mid \xt]$. If we parameterize $\tens{h}(\xt; \vtheta) = \xt + \sigma^2 \tilde{\tens{s}}_1(\xt; \vtheta)$ where $\tilde{\tens{s}}_1(\xt; \vtheta)$ is a first order score model with parameter $\vtheta$, the least squares problem in \cref{eq:least_square_dsm} becomes equivalent to the DSM objective:
\begin{equation}
    \min_\vtheta \expec_{p_{\text{data}}(\x)} \expec_{ q_{\sigma}(\tilde \x|\x)}[\Vert \sigma^2 \tilde{\tens{s}}_1(\xt; \vtheta) + \xt -\x\Vert_2^2]= \min_\vtheta 2\sigma^4 \cdot \gL_{\text{DSM}}(\vtheta).\label{eq:tweedie_dsm}
\end{equation}
From Tweedie's formula, we know the optimal $\vtheta^*$ satisfies $\tens{h}(\xt;\vtheta^\ast) = \xt + \sigma^2 \tilde{\tens{s}}_1(\xt;\vtheta^\ast) = \expec[\x \mid \xt] = \xt + \sigma^2 \tilde{\tens{s}}_1(\xt)$, from which we can conclude that $\tilde{\tens{s}}_1(\xt; \vtheta^\ast) = \tilde{\tens{s}}_1(\xt)$. This proves that minimizing the DSM objective in \cref{eq:tweedie_dsm} recovers the first order score.

There are other ways to derive DSM. For example, \cite{raphan2011least} provides a proof based on Bayesian least squares without relying on Tweedie's formula. Stein's Unbiased Risk Estimator (SURE)~\citep{stein1981estimation} can also provide an alternative proof based on integration by parts. Compared to these methods, our derivation can be easily extended to learn high order scores, leveraging a more general version of Tweedie's formula.

\subsection{Second order denoising score matching}
As a warm-up, we first consider the second order score, and later generalize to any desired order. 
Leveraging Tweedie's formula on $\expec[\x \x\tran \mid \xt]$ and $\expec[\x \mid \xt]$, we obtain the following theorem.
\begin{restatable}[]{theorem}{second_order_dsm}
\label{thm:second_order_dsm}
Given a D-dimensional distribution $p(\x)$ and $q_{\sigma}(\tilde{\x})\triangleq \int p(\x)q_{\sigma}(\tilde \x|\x)d\x$, 
we have
\begin{align}
    \label{eq:s2_polynomial_naive}
    &~\expec[\x\x\tran \mid \xt] = \tens{f}(\xt, \tilde{\tens{s}}_1, \tilde{\tens{s}}_2)\\
    \label{eq:s2_polynomial}
     &~\expec[\x\x\tran - \x\xt\tran - \xt\x\tran \mid \xt] = \tens{h}(\xt, \tilde{\tens{s}}_1, \tilde{\tens{s}}_2),
\end{align}
where $\tens{f}(\xt, \tilde{\tens{s}}_1, \tilde{\tens{s}}_2)$ and $\tens{h}(\xt, \tilde{\tens{s}}_1, \tilde{\tens{s}}_2)$ are polynomials of $\xt, \tilde{\tens{s}}_1(\xt), \tilde{\tens{s}}_2(\xt)$ defined as
\begin{align}
\label{eq: second order multivariate expectation}
    &~\tens{f}(\xt, \tilde{\tens{s}}_1, \tilde{\tens{s}}_2)=\xt\xt\tran + \sigma^2\xt \tilde{\tens{s}}_1(\xt)\tran + \sigma^2\tilde{\tens{s}}_1(\xt)\xt\tran + \sigma^4 \tilde{\tens{s}}_2(\xt)+\sigma^4 \tilde{\tens{s}}_1(\xt)\tilde{\tens{s}}_1(\xt)\tran + \sigma^2I,\\
\label{corollary:s2_least_square}
     &~\tens{h}(\xt, \tilde{\tens{s}}_1, \tilde{\tens{s}}_2) = -\xt \xt\tran + \sigma^4 \tilde{\tens{s}}_2(\xt) + \sigma^4 \tilde{\tens{s}}_1(\xt)\tilde{\tens{s}}_1(\xt)\tran + \sigma^2 I.
\end{align}
Here $\tilde{\tens{s}}_1(\xt)$ and $\tilde{\tens{s}}_2(\xt)$ denote the first and second order scores of $q_\sigma(\xt)$.
\end{restatable}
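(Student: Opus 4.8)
The plan is to generalize the first-order Tweedie derivation sketched in \cref{sec:tweedie_to_dsm} to the Hessian of $\log q_\sigma$. The governing observation is that both the first- and second-order scores of $q_\sigma(\xt)$ can be written as posterior expectations against $p(\x\mid\xt)=p(\x)q_\sigma(\xt\mid\x)/q_\sigma(\xt)$: differentiating the Gaussian kernel $q_\sigma(\xt\mid\x)$ in $\xt$ pulls down polynomial factors in $(\xt-\x)$, and after normalizing by $q_\sigma(\xt)$ these factors turn into conditional moments of $\x$ given $\xt$. So the strategy is to differentiate $q_\sigma$ twice, normalize, and then algebraically rearrange into the target moments using the first-order formula \cref{eq:twiddie_formula}.

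First I would differentiate $q_\sigma(\xt)=\int p(\x)q_\sigma(\xt\mid\x)\,d\x$ twice under the integral sign. Using $\nabla_{\xt}\log q_\sigma(\xt\mid\x)=-(\xt-\x)/\sigma^2$ and $\nabla^2_{\xt}\log q_\sigma(\xt\mid\x)=-I/\sigma^2$, the product rule gives
\begin{equation*}
\nabla^2_{\xt} q_\sigma(\xt)=\int p(\x)q_\sigma(\xt\mid\x)\Big[\tfrac{1}{\sigma^4}(\xt-\x)(\xt-\x)\tran-\tfrac{1}{\sigma^2}I\Big]\,d\x .
\end{equation*}
Dividing by $q_\sigma(\xt)$ identifies the integrand's weight as the posterior density, yielding $\nabla^2_{\xt} q_\sigma(\xt)/q_\sigma(\xt)=\tfrac{1}{\sigma^4}\expec[(\xt-\x)(\xt-\x)\tran\mid\xt]-\tfrac{1}{\sigma^2}I$.

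Next I would invoke the log-derivative identity $\nabla^2\log q=\nabla^2 q/q-(\nabla\log q)(\nabla\log q)\tran$, which here reads $\tilde{\tens{s}}_2(\xt)=\nabla^2_{\xt} q_\sigma(\xt)/q_\sigma(\xt)-\tilde{\tens{s}}_1(\xt)\tilde{\tens{s}}_1(\xt)\tran$. Combining with the previous display and solving for the second conditional moment gives $\expec[(\xt-\x)(\xt-\x)\tran\mid\xt]=\sigma^4\tilde{\tens{s}}_2(\xt)+\sigma^4\tilde{\tens{s}}_1(\xt)\tilde{\tens{s}}_1(\xt)\tran+\sigma^2 I$. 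Expanding $(\xt-\x)(\xt-\x)\tran=\xt\xt\tran-\xt\x\tran-\x\xt\tran+\x\x\tran$ and subtracting $\xt\xt\tran$ leaves exactly $\expec[\x\x\tran-\x\xt\tran-\xt\x\tran\mid\xt]$, establishing \cref{eq:s2_polynomial} for $\tens{h}$. To reach \cref{eq:s2_polynomial_naive} for $\tens{f}$, I would add back $\expec[\x\xt\tran+\xt\x\tran\mid\xt]$ and substitute the first-order Tweedie formula $\expec[\x\mid\xt]=\xt+\sigma^2\tilde{\tens{s}}_1(\xt)$ for the remaining linear conditional moment; collecting the resulting $\xt\xt\tran$, $\xt\tilde{\tens{s}}_1(\xt)\tran$, and $\tilde{\tens{s}}_1(\xt)\xt\tran$ terms reproduces $\tens{f}$.

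I expect the only real obstacle to be bookkeeping rather than conceptual: keeping the outer-product terms and their transposes straight, and correctly pairing $\xt\tilde{\tens{s}}_1(\xt)\tran$ against $\tilde{\tens{s}}_1(\xt)\xt\tran$ when symmetrizing. On the analytic side, the one nontrivial hypothesis is the legitimacy of differentiating twice under the integral sign; I would justify it by the same regularity/dominated-convergence conditions already used for the first-order formula, since the Gaussian tails supply the required integrability whenever $p$ has finite second moments.
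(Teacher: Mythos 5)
Your proof is correct, and every algebraic step checks out: the second derivative of the Gaussian kernel, the identity $\nabla^2\log q = \nabla^2 q/q - (\nabla\log q)(\nabla\log q)\tran$, the resulting centered moment $\expec[(\xt-\x)(\xt-\x)\tran\mid\xt]=\sigma^4\tilde{\tens{s}}_2(\xt)+\sigma^4\tilde{\tens{s}}_1(\xt)\tilde{\tens{s}}_1(\xt)\tran+\sigma^2 I$, and the two rearrangements into $\tens{h}$ and $\tens{f}$ all reproduce \cref{corollary:s2_least_square} and \cref{eq: second order multivariate expectation} exactly. Your route differs from the paper's in organization rather than substance: the paper rewrites $q_\sigma(\xt\mid\x)$ in exponential-family form with natural parameter $\var\eta=\x/\sigma^2$, introduces $\lambda(\xt)=\log\bigl(q_\sigma(\xt)/q_0(\xt)\bigr)$, and differentiates the posterior normalization identity $\int e^{\var\eta\tran\xt-\psi(\var\eta)-\lambda(\xt)}p(\var\eta)\,d\var\eta=1$ twice to obtain $\expec[\var\eta\mid\xt]=\mJ_\lambda(\xt)$ and $\expec[\var\eta\var\eta\tran\mid\xt]=\mH_\lambda(\xt)+\mJ_\lambda(\xt)\mJ_\lambda(\xt)\tran$, then substitutes the Gaussian-specific expressions for $\mJ_\lambda$ and $\mH_\lambda$. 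You instead differentiate the marginal $q_\sigma(\xt)$ directly under the integral and convert to log-derivatives at the end, which is more elementary and arguably cleaner for the second-order case since it lands immediately on the centered moment that gives $\tens{h}$. What the paper's exponential-family framing buys is the clean recursion of \cref{app:lemma:high_order_moment} and hence \cref{thm:high_order_dsm} for arbitrary order $n$ (and, in principle, non-Gaussian exponential-family noise); your direct computation generalizes too but the bookkeeping of repeated product-rule applications grows faster. Your closing remark about justifying differentiation under the integral sign addresses a regularity point the paper itself leaves implicit, so no gap there.
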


In \cref{thm:second_order_dsm}, \cref{eq: second order multivariate expectation} is directly given by Tweedie's formula on $\expec[\x\x\tran \mid \xt]$, and \cref{corollary:s2_least_square} is derived from Tweedie's formula on both $\expec[\x \mid \xt]$ and $\expec[\x\x\tran \mid \xt]$.
Given a noisy sample $\xt$, Theorem~\ref{thm:second_order_dsm} relates the second order moment of $\x$ to the first order score $\tilde{\tens{s}}_1(\xt)$ and second order score $\tilde{\tens{s}}_2(\xt)$ of $q_\sigma(\xt)$.
A detailed proof of Theorem~\ref{thm:second_order_dsm} is given in Appendix~\ref{app:proof}.

In the same way as how we derive DSM from Tweedie's formula in \cref{sec:tweedie_to_dsm}, we can obtain higher order score matching objectives with \cref{eq: second order multivariate expectation} and \cref{corollary:s2_least_square} as a least squares problem.

\begin{restatable}[]{theorem}{second_dsm_loss_theorem}
\label{eq:second_dsm_loss_theorem}
Suppose the first order score $\tilde{\tens{s}}_1(\xt)$ is given, we can learn a second order score model $\tilde{\tens{s}}_2(\xt; \vtheta)$ by optimizing the following objectives
\begin{align}
    \label{eq:s2_least_square_naive}
    &~\vtheta^\ast =\argmin_\vtheta \expec_{p_{\text{data}}(\x)}\expec_{ q_{\sigma}(\xt|\x)}\bigg[\Big\Vert{ \x\x\tran-\tens{f}(\xt,\tilde{\tens{s}}_1(\xt), \tilde{\tens{s}}_2(\xt;\vtheta))}\Big\Vert_2^2\bigg],\\
    \label{eq:s2_least_square}
    &~\vtheta^\ast =\argmin_\vtheta \expec_{p_{\text{data}}(\x)}\expec_{ q_{\sigma}(\xt|\x)}\bigg[\Big\Vert{ \x\x\tran - \x\xt\tran - \xt\x\tran - \tens{h}(\xt,\tilde{\tens{s}}_1(\xt), \tilde{\tens{s}}_2(\xt;\vtheta))}\Big\Vert_2^2\bigg]
\end{align}
where $\tens{f}(\cdot)$ and $\tens{h}(\cdot)$ are polynomials defined in \cref{eq: second order multivariate expectation} and \cref{corollary:s2_least_square}.
Assuming the model has an infinite capacity, then the optimal parameter $\vtheta^\ast$ satisfies $\tilde{\tens{s}}_2(\xt;\vtheta^\ast)=\tilde{\tens{s}}_2(\xt)$ for almost any $\xt$.
\end{restatable}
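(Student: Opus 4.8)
The plan is to mirror exactly the derivation of DSM from Tweedie's formula in \cref{sec:tweedie_to_dsm}, replacing the first-moment regression there by the matrix-valued second-moment regressions stated here. The single analytic ingredient is the classical least-squares fact already invoked for \cref{eq:least_square_dsm}: for any square-integrable target $\bft(\x,\xt)$, the minimizer of $\min_\vtheta \expec_{p_{\text{data}}(\x)}\expec_{q_{\sigma}(\xt\mid\x)}[\Vert\tens{g}(\xt;\vtheta)-\bft(\x,\xt)\Vert_2^2]$ over an unrestricted (infinite-capacity) predictor $\tens{g}$ is the conditional expectation $\tens{g}(\xt;\vtheta^\ast)=\expec[\bft(\x,\xt)\mid\xt]$ for almost every $\xt$. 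I would state this once, viewing each matrix as a vectorized target so that the Frobenius norm is an ordinary Euclidean norm, and then apply it twice.

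For \cref{eq:s2_least_square_naive} the target is $\bft=\x\x\tran$ and the predictor is $\tens{f}(\xt,\tilde{\tens{s}}_1(\xt),\tilde{\tens{s}}_2(\xt;\vtheta))$. Two observations make the fact applicable: (i) $\tilde{\tens{s}}_1(\xt)$ is given, so the only trainable object inside $\tens{f}$ is $\tilde{\tens{s}}_2(\xt;\vtheta)$; and (ii) from the defining expression for $\tens{f}$, the second order score enters only through the single term $\sigma^4\tilde{\tens{s}}_2(\xt)$, so for each fixed $\xt$ the map $\tilde{\tens{s}}_2(\xt)\mapsto\tens{f}(\xt,\tilde{\tens{s}}_1(\xt),\tilde{\tens{s}}_2(\xt))$ is an affine bijection (scaling by $\sigma^4>0$ followed by an $\xt$-dependent shift). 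Hence optimizing an infinite-capacity $\tilde{\tens{s}}_2(\cdot;\vtheta)$ is the same as optimizing an unrestricted predictor $\tens{g}=\tens{f}$, and the least-squares fact gives $\tens{f}(\xt,\tilde{\tens{s}}_1(\xt),\tilde{\tens{s}}_2(\xt;\vtheta^\ast))=\expec[\x\x\tran\mid\xt]$ a.e. By \cref{thm:second_order_dsm} the right-hand side equals $\tens{f}(\xt,\tilde{\tens{s}}_1(\xt),\tilde{\tens{s}}_2(\xt))$ evaluated at the \emph{true} second order score; subtracting the common $\xt$- and $\tilde{\tens{s}}_1$-dependent terms and dividing by $\sigma^4$ yields $\tilde{\tens{s}}_2(\xt;\vtheta^\ast)=\tilde{\tens{s}}_2(\xt)$ a.e., as claimed. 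The argument for \cref{eq:s2_least_square} is verbatim, with target $\bft=\x\x\tran-\x\xt\tran-\xt\x\tran$ and predictor $\tens{h}$: again $\tilde{\tens{s}}_2$ appears in $\tens{h}$ only via $\sigma^4\tilde{\tens{s}}_2(\xt)$, so the same cancel-and-rescale step recovers the true score. I would present the first case in full and remark that the second is identical.

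The two cancellations are routine. The one step deserving care, which I would flag as the main obstacle, is converting the abstract infinite-capacity hypothesis into a pointwise-a.e. identity: one must argue that, because $\tens{f}$ (resp. $\tens{h}$) is an invertible affine reparameterization of $\tilde{\tens{s}}_2$ at each $\xt$, every target in its image is realizable, so the conditional-expectation minimizer is actually attained by some $\vtheta^\ast$ rather than merely approached; and one must track the ``almost every $\xt$'' qualifier arising from the non-uniqueness of the regressor off the support of $q_\sigma$. Since the Gaussian convolution gives $q_\sigma(\xt)>0$ everywhere, the support is all of $\mathbb{R}^D$ and this last point is mild, but it is the only place where measure-theoretic bookkeeping genuinely enters.
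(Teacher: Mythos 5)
Your proposal is correct and follows essentially the same route as the paper's proof: identify the least-squares minimizer of each objective with the corresponding conditional expectation, then invoke \cref{thm:second_order_dsm} to equate that expectation with $\tens{f}$ (resp.\ $\tens{h}$) evaluated at the true score and read off $\tilde{\tens{s}}_2(\xt;\vtheta^\ast)=\tilde{\tens{s}}_2(\xt)$ a.e. The only difference is that you spell out the affine-bijection observation (that $\tilde{\tens{s}}_2$ enters $\tens{f}$ and $\tens{h}$ only through the term $\sigma^4\tilde{\tens{s}}_2(\xt)$), which justifies why the infinite-capacity model actually attains the conditional expectation --- a step the paper leaves implicit.
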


Here \cref{eq:s2_least_square_naive} and \cref{eq:s2_least_square} correspond to the least squares objective of \cref{eq:s2_polynomial_naive} and \cref{eq:s2_polynomial} respectively, and have the same set of solutions assuming sufficient model capacity.
In practice, we find that \cref{eq:s2_least_square} has a much simpler form than \cref{eq:s2_least_square_naive}, and will therefore use \cref{eq:s2_least_square} in our experiments.

\subsection{High order denoising score matching}
Below we generalize our approach to even higher order scores by (i) leveraging Tweedie's formula to connect higher order moments of $\x$ conditioned on $\xt$ to higher order scores of $q_{\sigma}(\xt)$; and (ii) finding the corresponding least squares objective.

\begin{restatable}[]{theorem}{high_order_dsm}
\label{thm:high_order_dsm}
$\expec[\otimes^{n}\x|\xt]=\tens{f}_n(\xt, \tilde{\tens{s}}_1, ..., \tilde{\tens{s}}_{n})$, where $\otimes^{n}\x\in \mathbb{R}^{D^n}$ denotes $n$-fold tensor multiplications,
$\tens{f}_n(\xt, \tilde{\tens{s}}_1, ..., \tilde{\tens{s}}_{n})$ is a polynomial of $\{\xt, \tilde{\tens{s}}_1(\xt), ..., \tilde{\tens{s}}_{n}(\xt)\}$ and $\tilde{\tens{s}}_k(\xt)$ represents the $k$-th order score of $q_{\sigma}(\xt)=\int p_{\text{data}}(\x)q_{\sigma}(\tilde \x|\x)d\x$.
\end{restatable}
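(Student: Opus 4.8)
The plan is to prove the statement by induction on $n$, establishing along the way an explicit recursion for the conditional moment tensor $\tens{M}_n(\xt) \triangleq \expec[\otimes^{n}\x \mid \xt]$. The base case $n=1$ is exactly Tweedie's formula \cref{eq:twiddie_formula}, which already exhibits $\tens{M}_1 = \xt + \sigma^2\,\tilde{\tens{s}}_1(\xt)$ as a degree-one polynomial in $\xt$ and $\tilde{\tens{s}}_1$.

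The engine of the induction is a single pointwise identity for the Gaussian kernel. Since $q_{\sigma}(\xt\mid\x)=\mathcal{N}(\xt\mid\x,\sigma^2 I)$, differentiating in $\xt$ gives $\nabla_{\xt} q_{\sigma}(\xt\mid\x) = \sigma^{-2}(\x-\xt)\,q_{\sigma}(\xt\mid\x)$, which rearranges componentwise to the operator identity
\begin{equation}
x_j\, q_{\sigma}(\xt\mid\x) = \big(\tilde x_j + \sigma^2 \partial_{\tilde x_j}\big)\, q_{\sigma}(\xt\mid\x) \quad \text{for each } j.
\end{equation}
First I would multiply through by the remaining clean-data factors $x_{j_2}\cdots x_{j_n}$, which are constant in $\xt$ and hence commute with $\partial_{\tilde x_j}$, then weight by $p_{\text{data}}(\x)$ and integrate over $\x$. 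Interchanging differentiation and integration (valid under the same decay/regularity assumptions implicitly used for Tweedie's formula and \cref{thm:second_order_dsm}) pulls the operator outside the integral; dividing by $q_{\sigma}(\xt)$ and using $\partial_{\tilde x_j} q_{\sigma}/q_{\sigma} = [\tilde{\tens{s}}_1]_{j}$ then yields the recursion
\begin{equation}
\tens{M}_n = \xt \otimes \tens{M}_{n-1} + \sigma^2\, \tilde{\tens{s}}_1(\xt) \otimes \tens{M}_{n-1} + \sigma^2\, \nabla_{\xt}\tens{M}_{n-1},
\end{equation}
where the new index supplied by the leading factor (or by $\nabla_{\xt}$) is placed first. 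As a consistency check I would specialize this to $n=2$ and confirm that the six resulting terms reproduce $\tens{f}$ in \cref{eq: second order multivariate expectation} exactly.

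The inductive step follows by inspecting each summand. Assuming $\tens{M}_{n-1}=\tens{f}_{n-1}(\xt,\tilde{\tens{s}}_1,\dots,\tilde{\tens{s}}_{n-1})$ is a polynomial in its arguments, the first two terms are products of this polynomial with $\xt$ and with $\tilde{\tens{s}}_1$, hence remain polynomials of the same family. The only summand that could raise the score order is $\nabla_{\xt}\tens{M}_{n-1}$; here the key observation is that differentiation sends generators to generators of one higher order, namely $\nabla_{\xt}\tilde{\tens{s}}_k = \tilde{\tens{s}}_{k+1}$ and $\nabla_{\xt}\xt = I$, so the product rule turns a polynomial in $\{\xt,\tilde{\tens{s}}_1,\dots,\tilde{\tens{s}}_{n-1}\}$ into a polynomial in $\{\xt,\tilde{\tens{s}}_1,\dots,\tilde{\tens{s}}_{n}\}$. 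Summing the three contributions shows $\tens{M}_n$ is a polynomial in $\xt,\tilde{\tens{s}}_1,\dots,\tilde{\tens{s}}_n$, which is the claimed $\tens{f}_n$ and closes the induction.

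I expect the main obstacle to be bookkeeping rather than conceptual: keeping the tensor-index ordering consistent when the fresh derivative or multiplication index is prepended to $\tens{M}_{n-1}$, and writing $\nabla_{\xt}$ of a general tensor monomial in closed form so that the ``order increases by at most one'' claim is manifest. A secondary point requiring care is the rigorous justification for swapping $\nabla_{\xt}$ with $\int(\cdot)\,d\x$, which I would discharge under the standard dominated-convergence/regularity conditions (finite moments of $p_{\text{data}}$ together with the Gaussian tails) already assumed elsewhere in the paper.
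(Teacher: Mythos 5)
Your proposal is correct and follows essentially the same route as the paper: the paper's Lemma 1 in Appendix B establishes exactly your recursion $\expec[\otimes^{n+1}\x\mid\xt]=\sigma^2\nabla_{\xt}\expec[\otimes^{n}\x\mid\xt]+\sigma^2\expec[\otimes^{n}\x\mid\xt]\otimes(\tilde{\tens{s}}_1(\xt)+\xt/\sigma^2)$ (differing only in whether the fresh index is prepended or appended, immaterial by symmetry of the moment tensor), derived by the same differentiation-under-the-integral device applied to the exponential-family form of the posterior rather than to the Gaussian kernel identity directly. The induction step, including the key observation that $\nabla_{\xt}$ raises each generator's order by one, matches the paper's argument verbatim.
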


\cref{thm:high_order_dsm} shows that there exists an equality between (high order) moments of the posterior distribution of $\x$ given $\xt$ and (high order) scores with respect to $\xt$.
To get some intuition, for $n=2$ the polynomial $\tens{f}_2(\xt, \tilde{\tens{s}}_1, \tilde{\tens{s}}_{2})$ is simply the function $\tens{f}$ in \cref{eq: second order multivariate expectation}. 
In \cref{app:proof}, we provide a recursive formula for obtaining the coefficients of $\tens{f}_n$ in closed form.

Leveraging Theorem~\ref{thm:high_order_dsm} and the least squares estimation of $\expec[\otimes^{k}\x|\xt]$, we can construct objectives for approximating the $k$-th order scores $\tilde{\tens{s}}_k(\xt)$ as in the following theorem. 

\begin{restatable}[]{theorem}{high_order_dsm_general_objective}
\label{thm:high_order_dsm_general_objective}
Given score functions $\tilde{\tens{s}}_1(\xt),...,\tilde{\tens{s}}_{k-1}(\xt)$, a $k$-th order score model $\tilde{\tens{s}}_k(\xt;\vtheta)$, and
\begin{equation*}
    \vtheta^{*}=\argmin_\vtheta  \expec_{p_{\text{data}}(\x)}\expec_{q_{\sigma}(\xt|\x)}[\Vert \otimes^{k}\x -\tens{f}_k(\xt, \tilde{\tens{s}}_1(\xt), ..., \tilde{\tens{s}}_{k-1}(\xt), \tilde{\tens{s}}_k(\xt;\vtheta))\Vert^2].
\end{equation*}
 We have $\tilde{\tens{s}}_{k}(\xt;\vtheta^{*})=\tilde{\tens{s}}_k(\xt)$ for almost all $\xt$. %
\end{restatable}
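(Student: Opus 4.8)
The plan is to reduce the statement to the elementary fact that least-squares regression is minimized by the conditional expectation, then to recover the top-order score by inverting the polynomial $\tens{f}_k$ in its last argument.

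First I would record the orthogonality (bias--variance) decomposition of the objective. Writing $\tens{g}_\vtheta(\xt) \triangleq \tens{f}_k(\xt, \tilde{\tens{s}}_1(\xt), \dots, \tilde{\tens{s}}_{k-1}(\xt), \tilde{\tens{s}}_k(\xt;\vtheta))$, for every $\vtheta$ we have
\begin{equation*}
\expec_{p_{\text{data}}(\x)}\expec_{q_{\sigma}(\xt|\x)}[\Vert \otimes^{k}\x - \tens{g}_\vtheta(\xt)\Vert^2] = \expec[\Vert \otimes^{k}\x - \expec[\otimes^{k}\x\mid\xt]\Vert^2] + \expec_{q_{\sigma}(\xt)}[\Vert \expec[\otimes^{k}\x\mid\xt] - \tens{g}_\vtheta(\xt)\Vert^2],
\end{equation*}
since the cross term vanishes after conditioning on $\xt$. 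The first summand is independent of $\vtheta$, so minimizing the objective is equivalent to driving the second summand to its infimum. Because $q_{\sigma}$ is a Gaussian convolution it is strictly positive on all of $\mathbb{R}^D$, so ``almost every $\xt$ under $q_{\sigma}$'' coincides with ``Lebesgue-almost every $\xt$,'' which is the sense in which the conclusion is stated.

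Next I would invoke \cref{thm:high_order_dsm}, which gives $\expec[\otimes^{k}\x\mid\xt] = \tens{f}_k(\xt, \tilde{\tens{s}}_1(\xt), \dots, \tilde{\tens{s}}_k(\xt))$ in terms of the true scores of $q_{\sigma}$. Under the infinite-capacity assumption the map $\tilde{\tens{s}}_k(\cdot;\vtheta)$ ranges over all measurable functions $\mathbb{R}^D \to \otimes^{k}\mathbb{R}^D$, and because the first $k-1$ arguments of $\tens{f}_k$ are fixed at the given true scores, the choice $\tilde{\tens{s}}_k(\cdot;\vtheta) = \tilde{\tens{s}}_k(\cdot)$ makes the second summand vanish. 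Hence any minimizer $\vtheta^*$ satisfies
\begin{equation*}
\tens{f}_k(\xt, \tilde{\tens{s}}_1(\xt), \dots, \tilde{\tens{s}}_{k-1}(\xt), \tilde{\tens{s}}_k(\xt;\vtheta^*)) = \tens{f}_k(\xt, \tilde{\tens{s}}_1(\xt), \dots, \tilde{\tens{s}}_{k-1}(\xt), \tilde{\tens{s}}_k(\xt))
\end{equation*}
for almost every $\xt$.

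The crux, and the step I expect to be the main obstacle, is to pass from this identity to $\tilde{\tens{s}}_k(\xt;\vtheta^*) = \tilde{\tens{s}}_k(\xt)$, which requires controlling how $\tens{f}_k$ depends on its last slot. Here I would use the recursive description of $\tens{f}_n$ from the proof of \cref{thm:high_order_dsm} to show that the $k$-th order score $\tilde{\tens{s}}_k$ enters $\expec[\otimes^{k}\x\mid\xt]$ only through the single linear term $\sigma^{2k}\tilde{\tens{s}}_k(\xt)$, every remaining term depending solely on $\xt$ and $\tilde{\tens{s}}_1,\dots,\tilde{\tens{s}}_{k-1}$. This is precisely the $k=2$ pattern visible in \cref{eq: second order multivariate expectation}, where $\tilde{\tens{s}}_2$ appears only as $\sigma^4\tilde{\tens{s}}_2(\xt)$. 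I would prove the general claim by induction, differentiating $q_{\sigma}(\xt) = \int p_{\text{data}}(\x)\,\mathcal{N}(\xt\mid\x,\sigma^2 I)\,d\x$ repeatedly and tracking the leading term: each gradient pulls down a factor $\sigma^{-2}(\x-\xt)$, so $\nabla^{(k)}_{\xt}q_{\sigma}/q_{\sigma}$ is a posterior-moment expression whose leading contribution is $\sigma^{-2k}\expec[\otimes^{k}(\x-\xt)\mid\xt]$; converting the derivatives $\nabla^{(j)}q_{\sigma}/q_{\sigma}$ into the log-derivatives $\tilde{\tens{s}}_j$ (a moment-to-cumulant change of variables via Fa\`a di Bruno) keeps the top score $\tilde{\tens{s}}_k$ linear with coefficient exactly $\sigma^{2k}$, while the $\xt$-shift from centered to raw moments only injects lower scores. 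Granting this, subtracting the two sides of the displayed identity cancels every term except $\sigma^{2k}\big(\tilde{\tens{s}}_k(\xt;\vtheta^*) - \tilde{\tens{s}}_k(\xt)\big)$, and dividing by $\sigma^{2k}\neq 0$ yields the claim for almost every $\xt$.
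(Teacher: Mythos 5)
Your proposal is correct and follows essentially the same route as the paper's proof: the least-squares objective is minimized by the conditional expectation, which by \cref{thm:high_order_dsm} equals $\tens{f}_k$ evaluated at the true scores, forcing $\tens{f}_k(\xt,\tilde{\tens{s}}_1(\xt),\dots,\tilde{\tens{s}}_k(\xt;\vtheta^*)) = \tens{f}_k(\xt,\tilde{\tens{s}}_1(\xt),\dots,\tilde{\tens{s}}_k(\xt))$ for almost every $\xt$. The one place you go beyond the paper is the final inversion step: the paper's proof simply asserts that this identity implies $\tilde{\tens{s}}_k(\xt;\vtheta^*)=\tilde{\tens{s}}_k(\xt)$, whereas you correctly flag that one must know the top-order score enters $\tens{f}_k$ only through the single linear term $\sigma^{2k}\tilde{\tens{s}}_k(\xt)$ --- a fact that does follow by induction from the recursion in \cref{app:lemma:high_order_moment}, since the only source of $\tilde{\tens{s}}_{n+1}$ in $\tens{f}_{n+1}$ is $\sigma^2\partial_{\xt}$ applied to the term $\sigma^{2n}\tilde{\tens{s}}_n$ --- so your added argument closes a step the paper leaves implicit.
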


As previously discussed, when $\sigma$ approaches $0$ such that $q_{\sigma}(\xt) \approx p_{\text{data}}(\x)$, $\tilde{\tens{s}}_{k}(\xt;\vtheta^{*})$ well-approximates the $k$-th order score of $p_{\text{data}}(\x)$.

\section{Learning Second Order Score Models}
\label{sec:learn_score_models}
Although our theory can be applied to scores of any order, we focus on second order scores for empirical analysis. In this section, we discuss the parameterization and empirical performance of the learned second order score models.

\subsection{Instantiating objectives for second order score models}
In practice, we find that \cref{eq:s2_least_square} has a much simpler expression than \cref{eq:s2_least_square_naive}. Therefore, we propose to parameterize $\tilde{\tens{s}}_2(\xt)$ with a model $\tilde{\tens{s}}_2(\xt;\vtheta)$, and optimize $\tilde{\tens{s}}_2(\xt;\vtheta)$ 
with \cref{eq:s2_least_square},
which can be simplified to the following after combining \cref{corollary:s2_least_square} and \cref{eq:s2_least_square}:
\begin{equation}
\label{eq:second_dsm_loss}
    \gL_{\text{\method}}(\vtheta) \triangleq \expec_{ p_{\text{data}}(\x)}\expec_{ q_{\sigma}(\xt|\x)}\bigg[\Big\Vert{ \tilde{\tens{s}}_2(\xt;\vtheta)+\tilde{\tens{s}}_1(\xt;\vtheta)\tilde{\tens{s}}_1(\xt;\vtheta)\tran + \frac{I - \z\z\tran}{\sigma^2} }\Big\Vert_2^2\bigg],
\end{equation}
where $\z \triangleq \frac{\xt-\x}{\sigma}$.
Note that \cref{eq:second_dsm_loss} requires knowing the first order score $\tilde{\tens{s}}_1(\xt)$ in order to train the second order score model $\tilde{\tens{s}}_2(\xt; \vtheta)$. We therefore use the following hybrid objective to simultaneously train both $\tilde{\tens{s}}_1(\xt;\vtheta)$ and $\tilde{\tens{s}}_2(\xt;\vtheta)$:
\begin{equation}
\label{eq:joint_objective}
\gL_{\text{joint}}(\vtheta)=\gL_{\text{\method}}(\vtheta) + \gamma\cdot \gL_{\text{DSM}}(\vtheta),
\end{equation}
where $\gL_{\text{DSM}}(\vtheta)$ is defined in \cref{eq:dsm_gaussian} and $\gamma\in\mathbb{R}_{>0}$ is a tunable coefficient. The expectation for $\gL_{\text{\method}}(\vtheta)$ and $\gL_{\text{DSM}}(\vtheta)$in \cref{eq:joint_objective} can be estimated with samples, and we optimize the following unbiased estimator
\begin{equation}
\label{eq:l_joint_sample}
\resizebox{0.9\hsize}{!}{
    $\displaystyle \hat{\gL}_{\text{joint}}(\vtheta) = \frac{1}{N}\sum_{i=1}^{N}\bigg[\Big\Vert{ \tilde{\tens{s}}_2(\xt_i;\vtheta)+\tilde{\tens{s}}_1(\xt_i;\vtheta)\tilde{\tens{s}}_1(\xt_i;\vtheta)\tran + \frac{I - \z_i\z_i\tran}{\sigma^2} }\Big\Vert_2^2+ \frac{\gamma}{2} \Big\Vert  \tilde{\tens{s}}_1(\xt_i;\vtheta) + \frac{\z_i}{\sigma} \Big\Vert_2^2 \bigg],$
}
\end{equation}
where we define $\z_i \triangleq \frac{\xt_i-\x_i}{\sigma}$, and $\{\xt_i\}_{i=1}^{N}$ are samples from $q_{\sigma}(\xt)=\int p_{\text{data}}(\x)q_{\sigma}(\tilde \x|\x)d\x$ which can be obtained by adding noise to samples from $p_{\text{data}}(\x)$. Similarly to DSM, when $\sigma \to 0$, the optimal model $\tilde{\tens{s}}_2(\xt;\vtheta^*)$ that minimizes \cref{eq:l_joint_sample} will be close to the second order score of $p_{\text{data}}(\x)$ because $q_{\sigma}(\xt) \approx p_{\text{data}}(\x)$. When $\sigma$ is large, the learned $\tilde{\tens{s}}_2(\xt;\vtheta)$ can be applied to tasks such as uncertainty quantification for denoising, which will be discussed in \cref{sec:uncertainty}. 

For downstream tasks that require only the diagonal of $\tilde{\tens{s}}_2$, we can instead optimize a simpler objective
\begin{gather}
\label{eq:joint_objective_diagonal}
    \gL_{\text{joint-diag}}(\vtheta)\triangleq \gL_{\text{\method-diag}}(\vtheta) + \gamma\cdot \gL_{\text{DSM}}(\vtheta),~~\text{where}\\
\label{eq:joint_diagonal_samples}
\resizebox{0.9\hsize}{!}{
    $\displaystyle \gL_{\text{\method-diag}}(\vtheta) \triangleq \expec_{p_{\text{data}}(\x)}\expec_{q_{\sigma}(\xt|\x)}\bigg[\Big\Vert{\text{diag}( \tilde{\tens{s}}_2(\xt;\vtheta)) + \tilde{\tens{s}}_1(\xt;\vtheta)\odot \tilde{\tens{s}}_1(\xt;\vtheta)+\frac{\textbf{1} - \z\odot \z}{\sigma^2}}\Big\Vert_2^2\bigg]$.
    }
\end{gather}
Here $\text{diag}(\cdot)$ denotes the diagonal of a matrix and $\odot$ denotes element-wise multiplication. Optimizing \cref{eq:joint_objective_diagonal} only requires parameterizing $\text{diag}( \tilde{\tens{s}}_2(\xt;\vtheta))$, which can significantly reduce the memory and computational cost for training and running the second order score model. Similar to $\hat{\gL}_{\text{joint}}(\vtheta)$, we estimate the expectation in  \cref{eq:joint_diagonal_samples} with empirical means.

\subsection{Parameterizing second order score models}
\label{sec:low_rank_parameterization}
In practice, the performance of learning second order scores is  affected by model parameterization. As many real world data distributions (\eg, images) tend to lie on low dimensional manifolds~\cite{narayanan2010sample,dasgupta2008random,saul2003think},
we propose to parametrize $\tilde{\tens{s}}_2(\xt;\vtheta)$ with low rank matrices defined as below
\begin{equation*}
    \tilde{\tens{s}}_2(\xt;\vtheta) = \bm{\alpha}(\xt;\vtheta)+\bm{\beta}(\xt;\vtheta)\bm{\beta}(\xt;\vtheta)\tran,
\end{equation*}
where $\bm{\alpha}(\cdot;\vtheta):\mathbb{R}^D\to\mathbb{R}^{D\times D}$ is a diagonal matrix, $\bm{\beta}(\cdot;\vtheta):\mathbb{R}^D\to\mathbb{R}^{D\times r}$ is a matrix with shape $D\times r$, and $r\le D$ is a positive integer. 

\subsection{Antithetic sampling for variance reduction}
As the standard deviation of the perturbed noise $\sigma$ approximates zero, training score models with denoising methods could suffer from a high variance. %
Inspired by a variance reduction method for DSM~\cite{wang2020wasserstein,song2021train},
we propose a variance reduction method for \method{}
\begin{equation*}
\label{eq:l_joint_sample_vr}
\resizebox{0.92\hsize}{!}{
    $\displaystyle\mathcal{L}_{\text{\method-VR}} = \expec_{\x\sim p_{\text{data}}(\x)}\expec_{\z\sim \mathcal{N}(0,I)}\bigg[\tens{\psi}(\xt_{+})^2+\tens{\psi}(\xt_{-})^2
    +2\frac{\I-\z\z\tran}{\sigma}\odot(\tens{\psi}(\xt_{+}) + \tens{\psi}(\xt_{-}) -2\tens{\psi}(\x))\bigg]$,}
\end{equation*}
where $\xt_{+}=\x+\sigma \z$, $\xt_{-}=\x-\sigma \z$ and $\tens{\psi}=\tilde{\tens{s}}_2+\tilde{\tens{s}}_1\tilde{\tens{s}}_1\tran$. 
Instead of using independent noise samples, we apply antithetic sampling and use two correlated (opposite) noise vectors centered at $\x$.
Similar to \cref{eq:joint_objective}, we define $\mathcal{L}_{\text{joint-VR}} =\mathcal{L}_{\text{\method-VR}}+\gamma\cdot\mathcal{L}_{\text{DSM-VR}}$, where $\mathcal{L}_{\text{DSM-VR}}$ is proposed in ~\cite{wang2020wasserstein}.

We empirically study the role of variance reduction (VR) in training models with DSM and \method{}.
We observe that VR is crucial for both DSM and \method{} when $\sigma$ is approximately zero, but is optional when $\sigma$ is large enough. To see this, we consider a 2-d Gaussian distribution $\mathcal{N}(0,I)$ and train $\tilde{\tens{s}}_1(\xt;\vtheta)$ and $\tilde{\tens{s}}_2(\xt;\vtheta)$ with DSM and \method{} respectively. We plot the learning curves in \cref{fig:vr_1,fig:vr_2}, and visualize the first dimension of the estimated scores for multiple noise scales $\sigma$ in \cref{fig:vr_3,fig:vr_4}. We observe that when $\sigma=0.001$, both DSM and \method{} have trouble converging after a long period of training, while the VR counterparts converge quickly (see \cref{fig:variance_reduction}).
When $\sigma$ gets larger, DSM and \method{} without VR can both converge quickly and provide reasonable score estimations (\cref{fig:vr_3,fig:vr_4}). We provide extra details in \cref{app:analysis}. 

\begin{figure}[h]
\vspace{-0.5\baselineskip}
     \centering
     \begin{subfigure}[b]{0.24\textwidth}
         \centering
         \includegraphics[width=\textwidth]{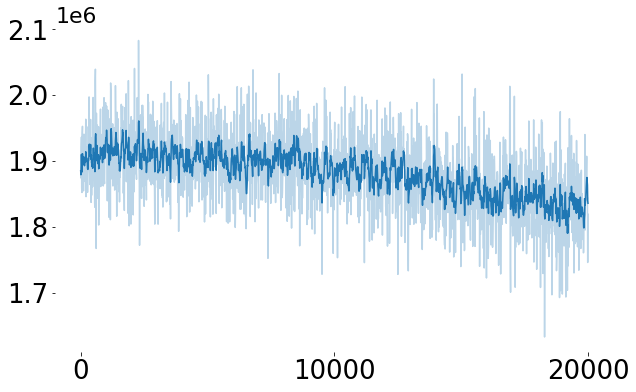}
         \caption{\method{} loss}
         \label{fig:vr_1}
     \end{subfigure}
     \begin{subfigure}[b]{0.24\textwidth}
         \centering
         \includegraphics[width=\textwidth]{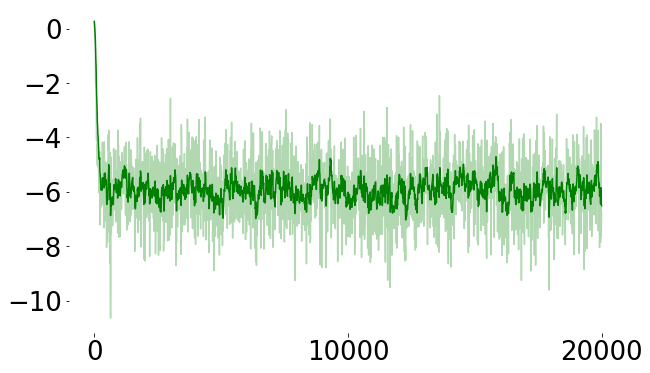}
         \caption{\method{}-VR loss }
         \label{fig:vr_2}
     \end{subfigure}
     \begin{subfigure}[b]{0.24\textwidth}
         \centering
         \includegraphics[width=\textwidth]{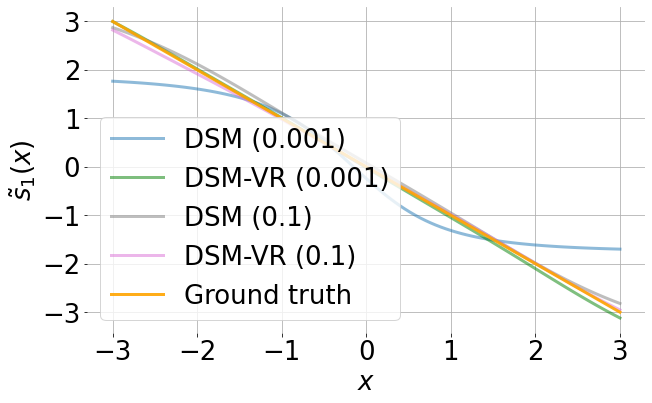}
         \caption{Estimated $\tilde{\tens{s}}_1$}
         \label{fig:vr_3}
     \end{subfigure}
     \begin{subfigure}[b]{0.24\textwidth}
         \centering
         \includegraphics[width=\textwidth]{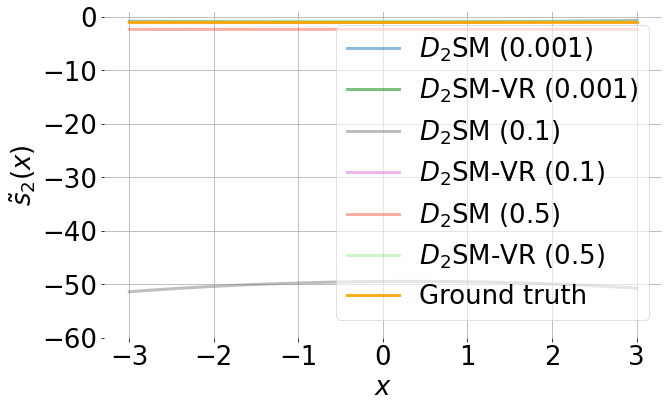}
         \caption{Estimated $\tilde{\tens{s}}_2$}
         \label{fig:vr_4}
     \end{subfigure}
    \caption{From left to right: (a) \method{} loss without variance reduction ($\sigma=10^{-3}$). (b) \method{} loss with variance reduction ($\sigma=10^{-3}$). (c) Estimated $\tilde{\tens{s}}_1$. (d) Estimated $\tilde{\tens{s}}_2$, where the estimation for \method{} ($0.001$) is too far from the ground truth to appear on the plot.
    }
     \vspace{-1\baselineskip}
    \label{fig:variance_reduction}
\end{figure}

\subsection{The accuracy and efficiency of learning second order scores}
\label{sec:l2}
We show that the proposed method can estimate second order scores more efficiently and accurately 
than those obtained by automatic differentiation of a first order score model trained with DSM. 
We observe in our experiments that $\tilde{\tens{s}}_1(\xt;\vtheta)$ jointly optimized via $\hat{\mathcal{L}}_{\text{joint}}$ or $\hat{\mathcal{L}}_{\text{joint-diag}}$ has a comparable empirical performance as trained directly by DSM, so we optimize $\tilde{\tens{s}}_1(\xt;\vtheta)$ and $\tilde{\tens{s}}_2(\xt;\vtheta)$ jointly in later experiments.
We provide additional experimental details in Appendix~\ref{app:analysis}.

\textbf{Learning accuracy}~We consider three synthetic datasets whose ground truth scores are available---a 100-dimensional correlated multivariate normal distribution and two high dimensional mixture of logistics distributions in \cref{tab:mse_synthetic}.
We study the performance of estimating $\tilde{\tens{s}}_2$ and the diagonal of $\tilde{\tens{s}}_2$.
For the baseline, we estimate \emph{second order scores} by taking automatic differentiation of $\tilde{\tens{s}}_1(\xt;\vtheta)$ trained jointly with $\tilde{\tens{s}}_2(\xt;\vtheta)$ using \cref{eq:l_joint_sample} or \cref{eq:joint_diagonal_samples}. As mentioned previously, $\tilde{\tens{s}}_1(\xt;\vtheta)$ trained with the joint method has the same empirical performance as trained directly with DSM. 
For our method, we directly evaluate $\tilde{\tens{s}}_2(\xt;\vtheta)$.
We compute the mean squared error between estimated \emph{second order scores} and the ground truth score of the \emph{clean} data since we use small $\sigma$ and $q_{\sigma}(\xt) \approx p_{\text{data}}(\x)$ (see Table~\ref{tab:mse_synthetic}). We observe that $\tilde{\tens{s}}_2(\xt;\vtheta)$ achieves better performance than the gradients of $\tilde{\tens{s}}_1(\xt;\vtheta)$.

\begin{table}[h]
\vspace{-1\baselineskip}
    \caption{Mean squared error between the estimated \emph{second order scores} and the ground truth on $10^5$ test samples. Each setup is trained with three random seeds and multiple noise scales $\sigma$.
    }
\begin{center}
    {\setlength{\extrarowheight}{1.8pt}
    \begin{adjustbox}{max width=\linewidth}
    \begin{tabular}{cccc||cccc}
    \Xhline{3\arrayrulewidth}  
    Methods & $\sigma=0.01$ &$\sigma=0.05$ & $\sigma=0.1$ & Methods & $\sigma=0.01$ & $\sigma=0.05$ & $\sigma=0.1$ \\
    \hline
    \multicolumn{4}{c||}{Multivariate normal (100-d)} & \multicolumn{4}{c}{ Mixture of logistics diagonal estimation (50-d, 20 mixtures)} \\
    \hline %
    $\tilde{\tens{s}}_1$ grad (DSM) &43.80$\pm$0.012 &43.76$\pm$0.001
  & 43.75$\pm$0.001 & $\tilde{\tens{s}}_1$ grad (DSM-VR) &26.41$\pm$0.55  &26.13$\pm$ 0.53  & 25.39$\pm$ 0.50 \\
  
    $\tilde{\tens{s}}_1$ grad (DSM-VR) &9.40$\pm$0.049 &9.39$\pm$0.015 & 9.21$\pm$0.020 & $\tilde{\tens{s}}_2$ (Ours) &\textbf{18.43$\pm$ 0.11} & \textbf{18.50$\pm$ 0.25} & \textbf{17.88$\pm$0.15}\\
    \cline{5-8}
    $\tilde{\tens{s}}_2$ (Ours, $r=15$) & 7.12$\pm$ 0.319 &6.91$\pm$0.078
   &7.03$\pm$0.039 &\multicolumn{4}{c}{Mixture of logistics diagonal estimation (80-d, 20 mixtures)}  \\
    \cline{5-8}
    $\tilde{\tens{s}}_2$ (Ours, $r=20$) & 5.24$\pm$0.065 &5.07$\pm$0.047  &5.13$\pm$0.065  &$\tilde{\tens{s}}_1$ grad (DSM-VR)  &32.80$\pm$ 0.34  &32.44$\pm$ 0.30  &31.51$\pm$ 0.43 \\
    
    $\tilde{\tens{s}}_2$ (Ours, $r=30$) & \textbf{1.76$\pm$0.038} &\textbf{2.05$\pm$0.544}  &\textbf{1.76$\pm$0.045}  &$\tilde{\tens{s}}_2$ (Ours)  &\textbf{21.68$\pm$ 0.18}  & \textbf{22.23$\pm$0.08} &\textbf{22.18$\pm$ 0.08} \\
    \Xhline{3\arrayrulewidth}
    \end{tabular}
    \end{adjustbox}
    }
\end{center}
     \vspace{-0.5\baselineskip}
    \label{tab:mse_synthetic}
\end{table}

\textbf{Computational efficiency}~\label{sec:efficiency}Computing the gradients of $\tilde{\tens{s}}_1(\xt;\vtheta)$ via automatic differentiation can be expensive for high dimensional data and deep neural networks. 
To see this, we consider two models---a 3-layer MLP and a U-Net~\cite{ronneberger2015u}, which is used for image experiments in the subsequent sections.
We consider a 100-d data distribution for the MLP model 
and a 784-d data distribution for the U-Net.
We parameterize $\tilde{\tens{s}}_1$ and $\tilde{\tens{s}}_2$ with the same model architecture and use a batch size of $10$ for both settings.
We report the wall-clock time averaged in 7 runs used for estimating second order scores during test time on a TITAN Xp GPU in Table~\ref{tab:efficiency}. We observe that $\tilde{\tens{s}}_2(\xt;\vtheta)$ is 500$\times$ faster than using automatic differentiation for $\tilde{\tens{s}}_1(\xt;\vtheta)$ on the MNIST dataset.

\section{Uncertainty Quantification with Second Order Score Models}
\label{sec:uncertainty}
Our second order score model $\tilde{\tens{s}}_2(\xt;\vtheta)$ can capture and quantify the uncertainty of denoising on synthetic and real world image datasets, based on the following result by combining \cref{eq:twiddie_formula,eq: second order multivariate expectation}
\begin{equation}
    \label{eq:s2_cov}
     \operatorname{Cov}[\x\mid \xt] \triangleq \expec[\x\x\tran\mid\xt]-\expec[\x\mid\xt]\expec[\x\mid\xt]\tran=\sigma^4 \tilde{\tens{s}}_2(\xt) + \sigma^2I
\end{equation}
By estimating $\operatorname{Cov}[\x\mid \xt]$ via $\tilde{\tens{s}}_2(\xt)$, we gain insights into how pixels are correlated with each other under denoising settings, and which part of the pixels has large uncertainty. To examine the uncertainty given by our $\tilde{\tens{s}}_2(\xt; \vtheta)$, we perform the following experiments (details in \cref{app:uncertainty}).

\textbf{Synthetic experiments}~ We first consider 2-d synthetic datasets shown in \cref{fig:2d_denoising}, 
where we train $\tilde{\tens{s}}_1(\xt;\vtheta)$ and $\tilde{\tens{s}}_2(\xt;\vtheta)$ jointly with \jointmethod.  
Given the trained score models, we estimate
$\expec[\x\mid\xt]$ and $\operatorname{Cov}[\x\mid \xt]$ using \cref{eq:twiddie_formula} and \cref{eq:s2_cov}. We approximate
the posterior distribution $p(\x|\xt)$ with a conditional normal distribution
$\mathcal{N}(\x\mid\expec[\x\mid\xt],\operatorname{Cov}[\x\mid \xt])$.
We compare our result
with that of \cref{eq:twiddie_formula}, which only utilizes $\tilde{\tens{s}}_1$ (see \cref{fig:2d_denoising}). We observe that unlike \cref{eq:twiddie_formula}, which is a point estimator, the incorporation of covariance matrices (estimated by $\tilde{\tens{s}}_2(\xt;\vtheta)$) captures uncertainty in denoising.
\begin{figure}
 \vspace{-1\baselineskip}
     \centering
     \begin{subfigure}[b]{0.119\textwidth}
         \centering
         \includegraphics[width=\textwidth]{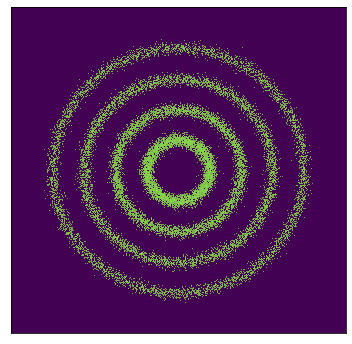}
         \caption*{Data}
         \label{fig:2d_denoising_1}
     \end{subfigure}
     \begin{subfigure}[b]{0.119\textwidth}
         \centering
         \includegraphics[width=\textwidth]{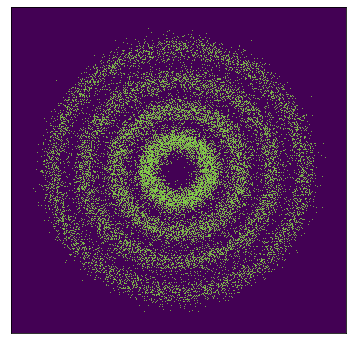}
         \caption*{Noisy}
     \end{subfigure}
     \begin{subfigure}[b]{0.119\textwidth}
         \centering
         \includegraphics[width=\textwidth]{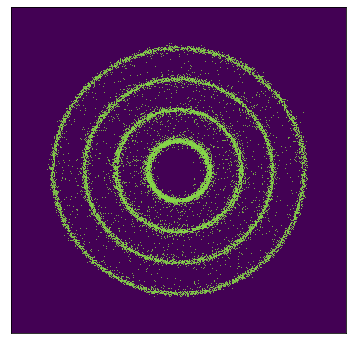}
         \caption*{Only $\tilde{\tens{s}}_1$}
     \end{subfigure}
     \begin{subfigure}[b]{0.119\textwidth}
         \centering
         \includegraphics[width=\textwidth]{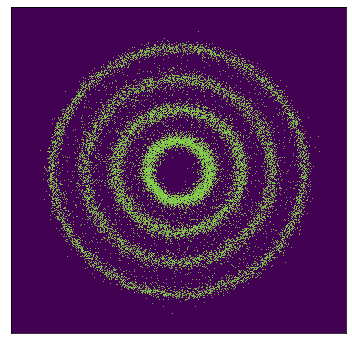}
         \caption*{With $\tilde{\tens{s}}_2$}
     \end{subfigure}
     \begin{subfigure}[b]{0.119\textwidth}
         \centering
         \includegraphics[width=\textwidth]{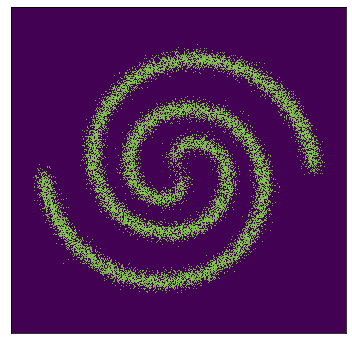}
         \caption*{Data}
         \label{fig:2d_denoising_1}
     \end{subfigure}
     \begin{subfigure}[b]{0.119\textwidth}
         \centering
         \includegraphics[width=\textwidth]{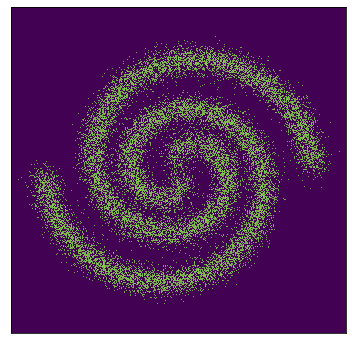}
         \caption*{Noisy}
     \end{subfigure}
     \begin{subfigure}[b]{0.119\textwidth}
         \centering
         \includegraphics[width=\textwidth]{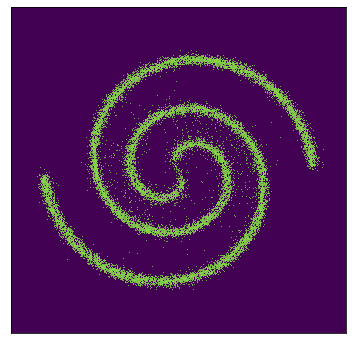}
         \caption*{Only $\tilde{\tens{s}}_1$}
     \end{subfigure}
     \begin{subfigure}[b]{0.119\textwidth}
         \centering
         \includegraphics[width=\textwidth]{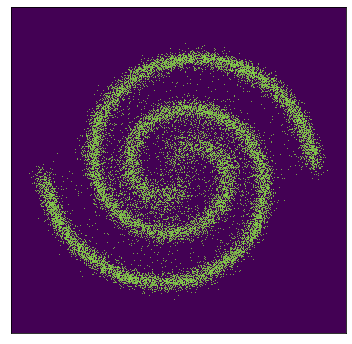}
         \caption*{With $\tilde{\tens{s}}_2$}
     \end{subfigure}
    \caption{Denoising 2-d synthetic data.
    The incorporation of $\tilde{\tens{s}}_2$ improves uncertainty quantification.}
     \vspace{-1\baselineskip}
    \label{fig:2d_denoising}
\end{figure}

\textbf{Covariance diagonal visualizations}~We visualize the diagonal of the estimated $\operatorname{Cov}[\x\mid \xt]$ for MNIST and CIFAR-10~\cite{krizhevsky2009learning} in \cref{fig:diagonal}. 
We find that the diagonal values are in general larger for pixels near the edges where there are multiple possibilities corresponding to the same noisy pixel. The diagonal values are smaller for the background pixels where there is less uncertainty.
We also observe that covariance matrices corresponding to smaller noise scales tend to have smaller values on the diagonals, implying that the more noise an image has, the more uncertain the denoised results are.

\begin{figure}
     \centering
     \includegraphics[width=\textwidth]{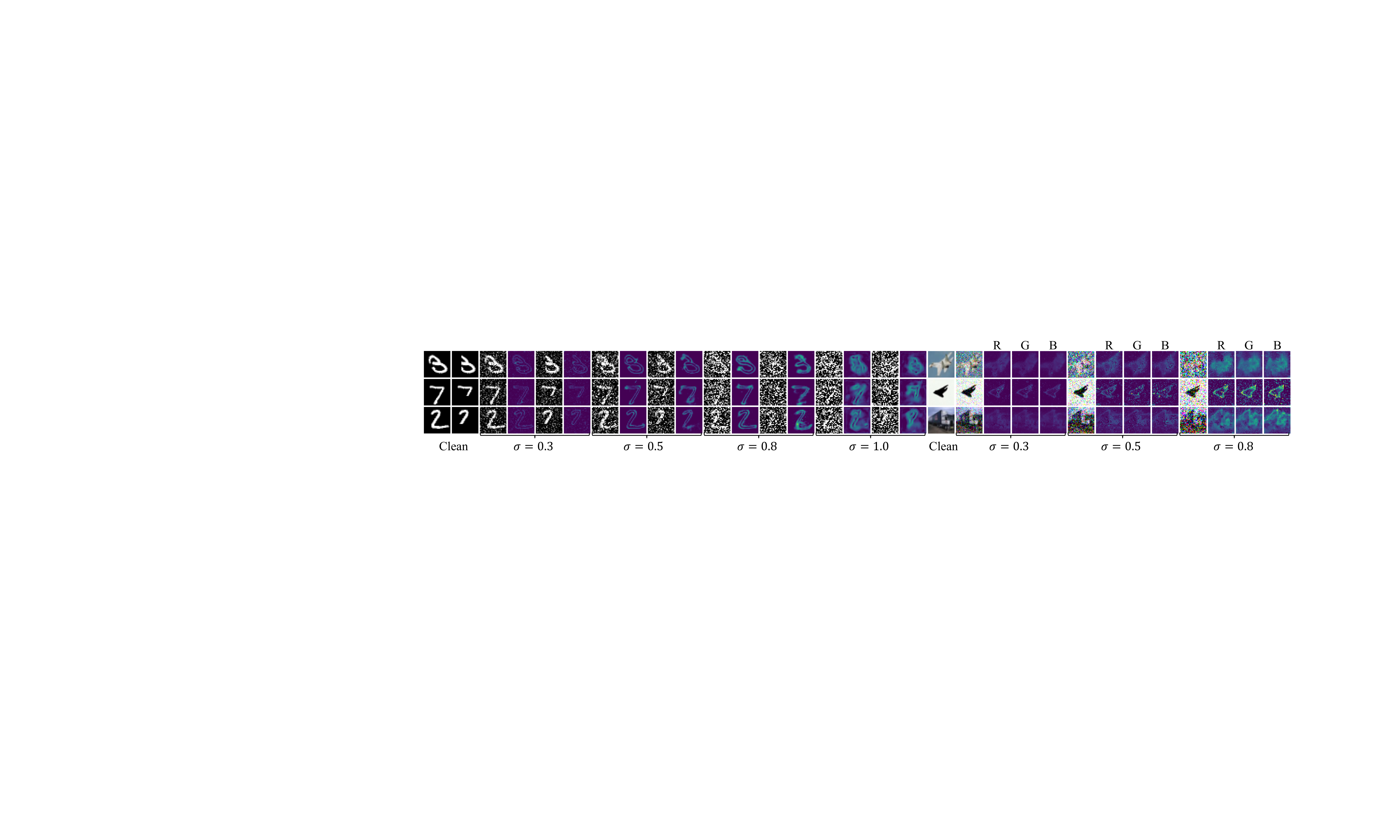}
    \caption{Visualizations of the estimated covariance matrix diagonals on MNIST and CIFAR-10. For CIFAR-10 images, we visualize the diagonal for R, G, B channels separately. Images corrupted with more noise tend to have larger covariance values, indicating larger uncertainty in denoising. Pixels in background have smaller values than pixels near edges, indicating more confident denoising.
    }
    \vspace{-0.6\baselineskip}
    \label{fig:diagonal}
\end{figure}

\textbf{Full convariance visualizations}~We visualize the eigenvectors (sorted by eigenvalues) of $\operatorname{Cov}[\x\mid \xt]$ estimated by $\tilde{\tens{s}}_2(\xt;\vtheta)$ in \cref{fig:pca}. We observe that they can correspond to different digit identities, indicating uncertainty in the identity of the denoised image. This suggests $\operatorname{Cov}[\x\mid \xt]$ can capture additional information for uncertainty beyond its diagonal.

\begin{figure}[h]
     \centering
     \includegraphics[width=\textwidth]{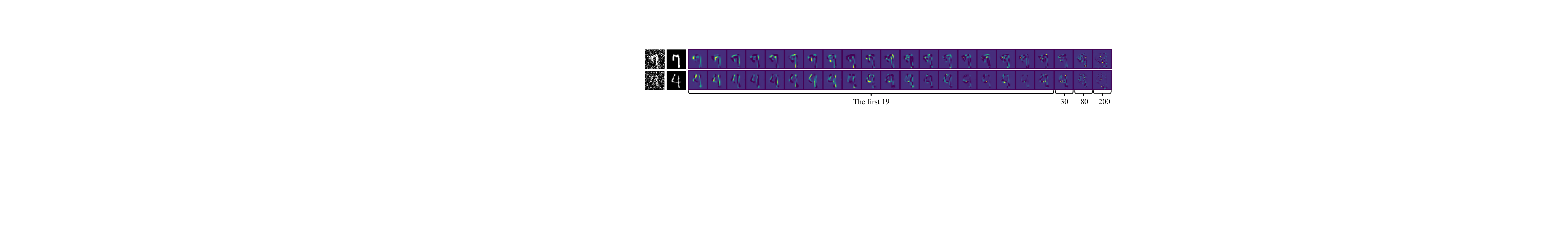}
    \caption{Eigenvectors of the estimated covariance matrix on MNIST. The first column shows the noisy images ($\sigma=0.5$) and the second column shows clean images. The remaining columns show the first 19, plus the 30, 80 and 200-th eigenvectors of the matrix. We can see digit 7 and 9 in the eigenvectors corresponding to the noisy 7, and digit 4 and 9 in the second row, which
     implies that the estimated covariance matrix can capture different possibilities of the denoising results.
    }
    \vspace{-1\baselineskip}
    \label{fig:pca}
\end{figure}

\section{Sampling with Second Order Score Models}
\label{sec:sampling_with_s2}
Here we show that our second order score model $\tilde{\tens{s}}_2(\xt;\vtheta)$ can be used to improve the mixing speed of Langevin dynamics sampling.

\begin{center}
\vspace{-1\baselineskip}
\begin{minipage}{.35\linewidth}
\begin{table}[H]
    \caption{Speed analysis: direct modeling vs. autodiff.
    }
    \begin{center}
    \begin{adjustbox}{max width=\linewidth}
    \begin{tabular}{c|c|c}
    \Xhline{3\arrayrulewidth} \bigstrut \diagbox[height=0.5cm, width=4cm]{Method}{Dimension}
			  & \multicolumn{1}{c|}{$D=100$ (MLP) } & \multicolumn{1}{c}{$D=784$ (U-Net)}\\
    \Xhline{1\arrayrulewidth}\bigstrut
     Autodiff  &32100 $\pm$ 156 $\mu$s &34600 $\pm$ 194 ms  \\
    Ours (rank=20)  &\textbf{380 $\pm$ 7.9 $\mu$s}   &\textbf{67.9 $\pm$ 1.93 ms}\\
    Ours (rank=50)  &\textbf{377 $\pm$ 10.8 $\mu$s}  &72.5 $\pm$ 1.93 ms \\
    Ours (rank=200)  &546 $\pm$ 1.91 $\mu$s  & 68.8 $\pm$ 1.02 ms \\
    Ours (rank=1000) &1840 $\pm$ 97.1 $\mu$s  & 69.4 $\pm$ 2.63 ms \\
    \Xhline{3\arrayrulewidth}
    \end{tabular}
    \end{adjustbox}
    \end{center}
    \label{tab:efficiency}
\end{table}
\begin{table}[H]
    \vspace{-0.5cm}
    \caption{ESS on synthetic datasets. Datasets are shown in \cref{fig:2d_sampling_dataset1}. We use 32 chains, each with length 10000 and 1000 burn-in steps.
    }
    \begin{center}
    \begin{adjustbox}{max width=\linewidth}
    \begin{tabular}{c|c|c}
    \Xhline{3\arrayrulewidth} \bigstrut
			  & \multicolumn{1}{c|}{Dataset 1} & \multicolumn{1}{c}{Dataset 2}\\
    \Xhline{1\arrayrulewidth} \bigstrut \diagbox[height=0.5cm, width=4cm]{Method}{Metric} &
			 ESS $\uparrow$  & ESS $\uparrow$ \\
    \Xhline{1\arrayrulewidth}\bigstrut
    Langevin  &21.81  &26.33 \\
    Ozaki & \textbf{28.89}  &\textbf{46.57} \\
    \Xhline{3\arrayrulewidth}
    \end{tabular}
    \end{adjustbox}
    \end{center}
    \label{tab:ess}
\end{table}

\end{minipage}
\hfill
\begin{minipage}{.63\linewidth}
\begin{figure}[H]
     \centering
     \includegraphics[width=\textwidth]{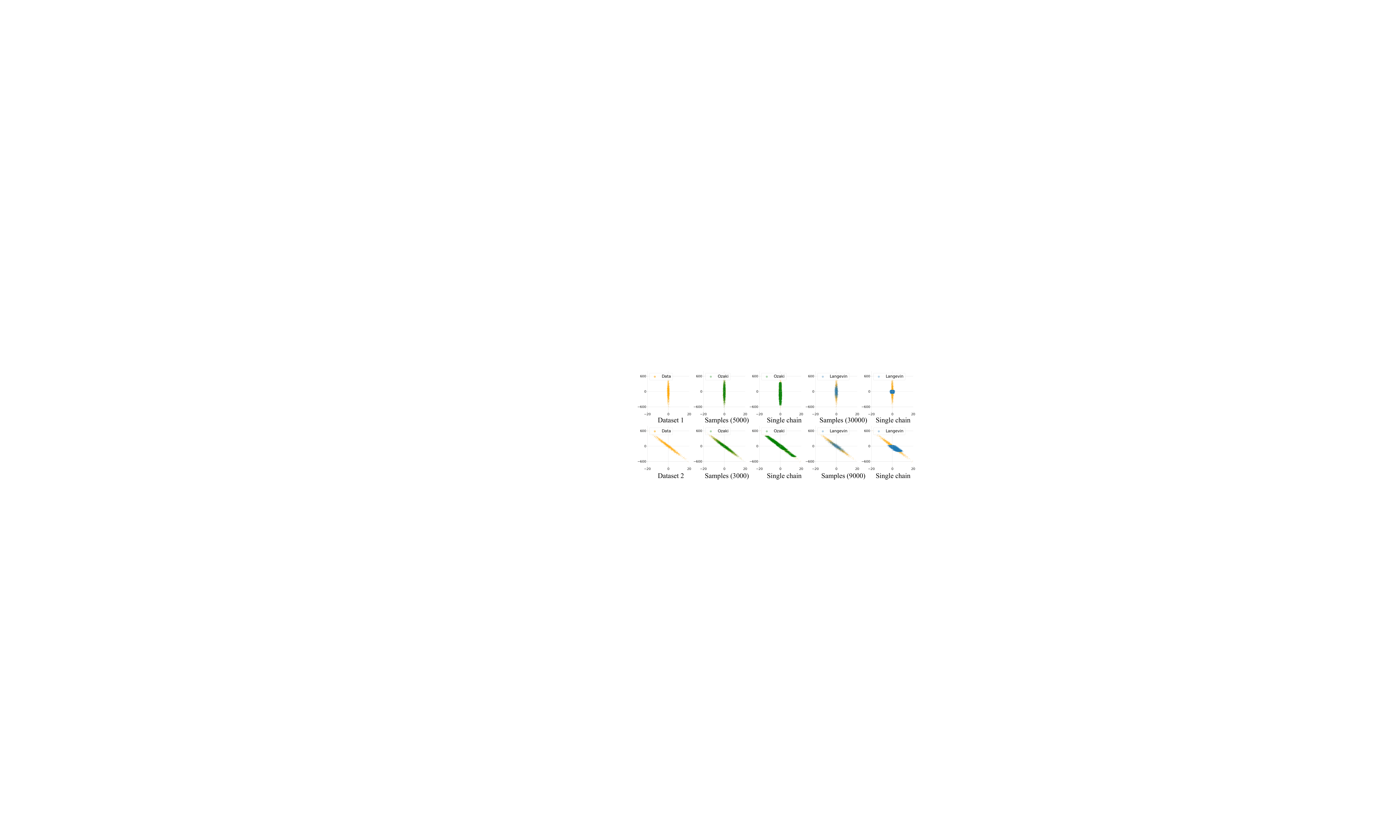}
    \caption{Sampling with Ozaki and Langevin dynamics. We tune the optimal step size separately for both methods. %
    The number in the parenthesis (Column 2 and 4) stands for the iterations used for sampling. We observe that Ozaki obtains more reasonable samples than Langevin dynamics using 1/6 or 1/3 iterations. Column 3 and 5 show samples within a single chain with length 31000 and 1000 burn-in steps.}
    \label{fig:2d_sampling_dataset1}
\end{figure}
\end{minipage}
\end{center}
\subsection{Background on the sampling methods}
\textbf{Langevin dynamics}~Langevin dynamics~\cite{bussi2007accurate,welling2011bayesian} samples from $p_{\text{data}}(\x)$ using the first order score function $\tens{s}_{1}(\x)$. Given a prior distribution $\pi(\x)$, a fixed step size $\eps>0$ and an initial value $\tilde{\x}_0\sim\pi(\x)$, Langevin dynamics update the samples iteratively as follows
\begin{equation}
\small
    \tilde{\x}_t=\tilde{\x}_{t-1}+\frac{\eps}{2}{\tens{s}}_1(\tilde{\x}_{t-1})+\sqrt{\eps}\z_t,
\end{equation}
where $\z_t\sim\mathcal{N}(0,I)$. As $\eps\to0$ and $t\to\infty$, $\tilde{\x}_t$ is a sample from $p_{\text{data}}(\x)$ under suitable conditions.

\textbf{Ozaki sampling}~Langevin dynamics with Ozaki discretization~\cite{stramer1999langevin} leverages second order information in $\tens{s}_2(\rvx)$ to pre-condition Langevin dynamics: %
\begin{equation}
\label{eq:ozaki_o}
    \tilde{\x}_t=\tilde{\x}_{t-1}+M_{t-1} {\tens{s}}_1(\tilde{\x}_{t-1})+\Sigma_{t-1}^{1/2}\z_t, \; \z_t\sim\mathcal{N}(0,I)
\end{equation}
where 
$M_{t-1}=(e^{\eps {\tens{s}}_2(\xt_{t-1})}-I){\tens{s}}_2(\xt_{t-1})^{-1}$ and $\Sigma_{t-1}=(e^{2\eps {\tens{s}}_2(\xt_{t-1})}-I){\tens{s}}_2(\xt_{t-1})^{-1}$. 
It is shown that under certain conditions, this variation can improve the convergence rate of Langevin dynamics
~\cite{dalalyan2019user}.
In general, \cref{eq:ozaki_o} is expensive to compute due to inversion, exponentiation and taking square root of matrices, so we simplify \cref{eq:ozaki_o} by approximating ${\tens{s}}_2(\xt_{t-1})$ with its diagonal in practice. 

In our experiments, we only consider Ozaki sampling with ${\tens{s}}_2$ replaced by its diagonal in \cref{eq:ozaki_o}. 
As we use small $\sigma$, $\tilde{\tens{s}}_1\approx {\tens{s}}_1$ and $\tilde{\tens{s}}_2\approx {\tens{s}}_2$.
We observe that $\text{diag}(\tilde{\tens{s}}_2(\xt;\vtheta))$ in
Ozaki sampling can be computed in parallel with $\tilde{\tens{s}}_1(\xt;\vtheta)$ on modern GPUs, making the wall-clock time per iteration of Ozaki sampling comparable to that of Langevin dynamics.
Since we only use the diagonal of $\tilde{\tens{s}}_2(\xt;\vtheta)$ in sampling, we can directly learn the diagonal of $\tilde{\tens{s}}_2(\tilde{\x})$ with \cref{eq:joint_objective_diagonal}.

\begin{figure}
 \vspace{-0.5\baselineskip}
     \centering
     \begin{subfigure}[b]{0.16\textwidth}
         \centering
         \includegraphics[width=\textwidth]{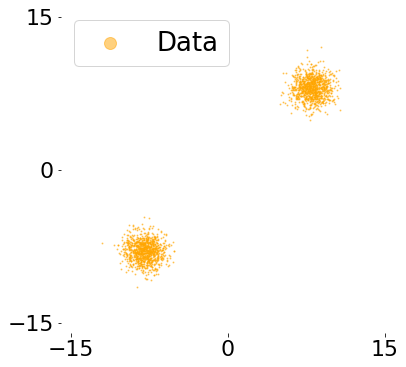}
         \caption{Data}
         \label{fig:2d_sampling_dataset2_1}
     \end{subfigure}
     \begin{subfigure}[b]{0.16\textwidth}
         \centering
         \includegraphics[width=\textwidth]{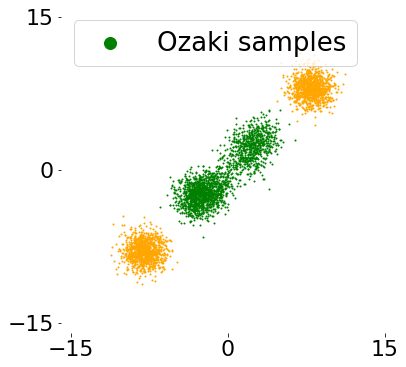}
         \caption{50 iterations}
     \end{subfigure}
     \begin{subfigure}[b]{0.16\textwidth}
         \centering
         \includegraphics[width=\textwidth]{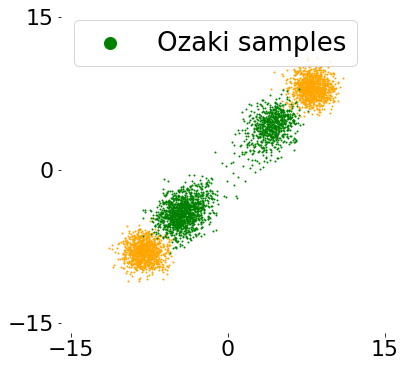}
         \caption{100 iterations}
     \end{subfigure}
     \begin{subfigure}[b]{0.16\textwidth}
         \centering
         \includegraphics[width=\textwidth]{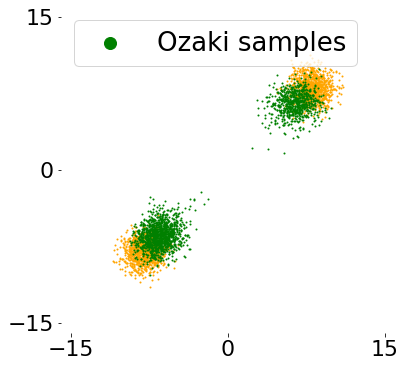}
         \caption{200 iterations}
     \end{subfigure}
     \begin{subfigure}[b]{0.16\textwidth}
         \centering
         \includegraphics[width=\textwidth]{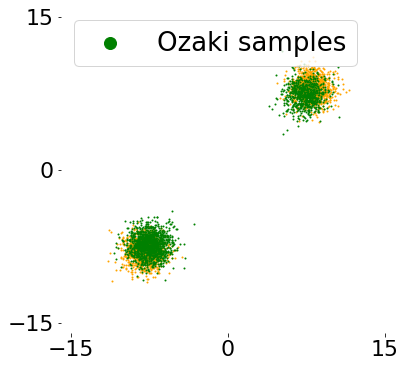}
         \caption{300 iterations}
     \end{subfigure}
     \begin{subfigure}[b]{0.16\textwidth}
         \centering
         \includegraphics[width=\textwidth]{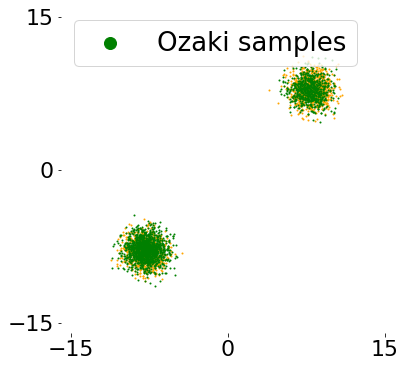}
         \caption{400 iterations}
     \end{subfigure}
     \vfill
     \begin{subfigure}[b]{0.16\textwidth}
         \centering
         \includegraphics[width=\textwidth]{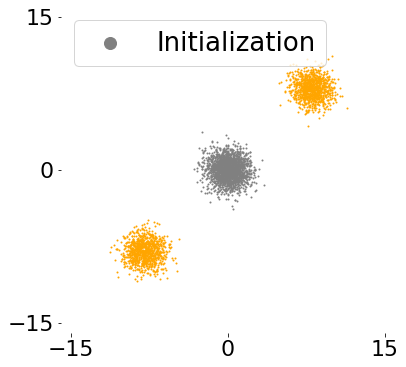}
         \caption{Initialization}
     \end{subfigure}
     \begin{subfigure}[b]{0.16\textwidth}
         \centering
         \includegraphics[width=\textwidth]{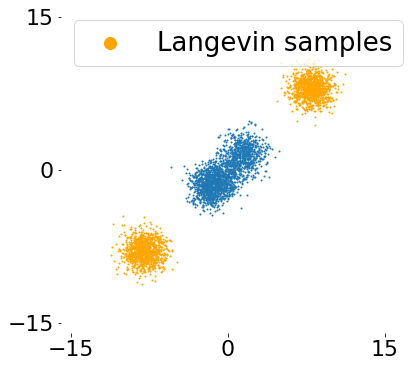}
         \caption{50 iterations}
     \end{subfigure}
     \begin{subfigure}[b]{0.16\textwidth}
         \centering
         \includegraphics[width=\textwidth]{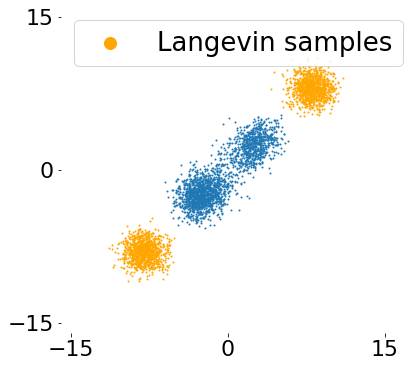}
         \caption{100 iterations}
     \end{subfigure}
     \begin{subfigure}[b]{0.16\textwidth}
         \centering
         \includegraphics[width=\textwidth]{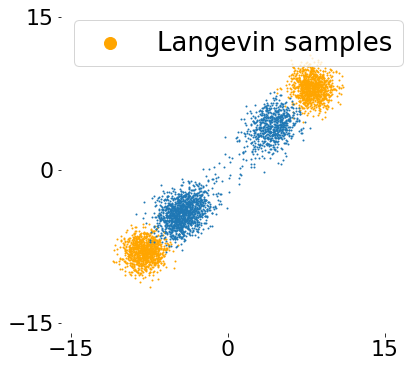}
         \caption{200 iterations}
     \end{subfigure}
     \begin{subfigure}[b]{0.16\textwidth}
         \centering
         \includegraphics[width=\textwidth]{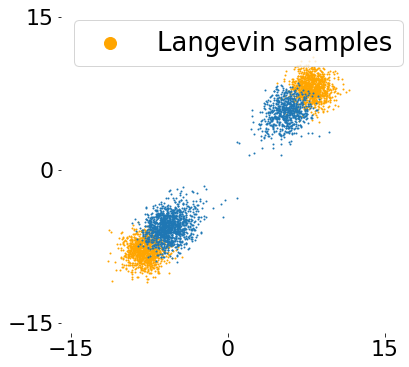}
         \caption{300 iterations}
     \end{subfigure}
     \begin{subfigure}[b]{0.16\textwidth}
         \centering
         \includegraphics[width=\textwidth]{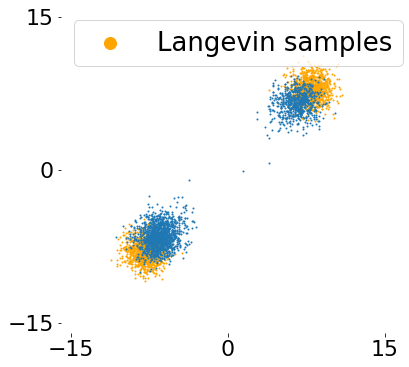}
         \caption{400 iterations}
     \end{subfigure}
    \caption{
    Sampling a two mode distribution.
    We use the same step size $\eps=0.01$ for both methods.  
    We observe that Ozaki sampling converges faster than Langevin sampling. 
    }
    \label{fig:2d_sampling_dataset2}
\end{figure}
\subsection{Synthetic datasets}
\label{sec:ozaki_synthetic}
We first consider 2-d synthetic datasets in \cref{fig:2d_sampling_dataset1} to compare the mixing speed of Ozaki sampling with Langevin dynamics. 
We search the optimal step size for each method and observe that Ozaki sampling can use a larger step size and converge faster than Langevin dynamics (see \cref{fig:2d_sampling_dataset1}).
We use the optimal step size for both methods and report the smallest effective sample size (ESS) of all the dimensions~\cite{song2017nice,girolami2011riemann} in \cref{tab:ess}. 
We observe that Ozaki sampling has better ESS values than Langevin dynamics, implying faster mixing speed.
Even when using the same step size, Ozaki sampling still converges faster than Langevin dynamics on the two-model Gaussian dataset we consider (see \cref{fig:2d_sampling_dataset2}). 
In all the experiments, we use $\sigma=0.1$ 
and we provide more experimental details in Appendix~\ref{app:ozaki_sampling}.

\begin{figure}
     \centering
     \includegraphics[width=\linewidth]{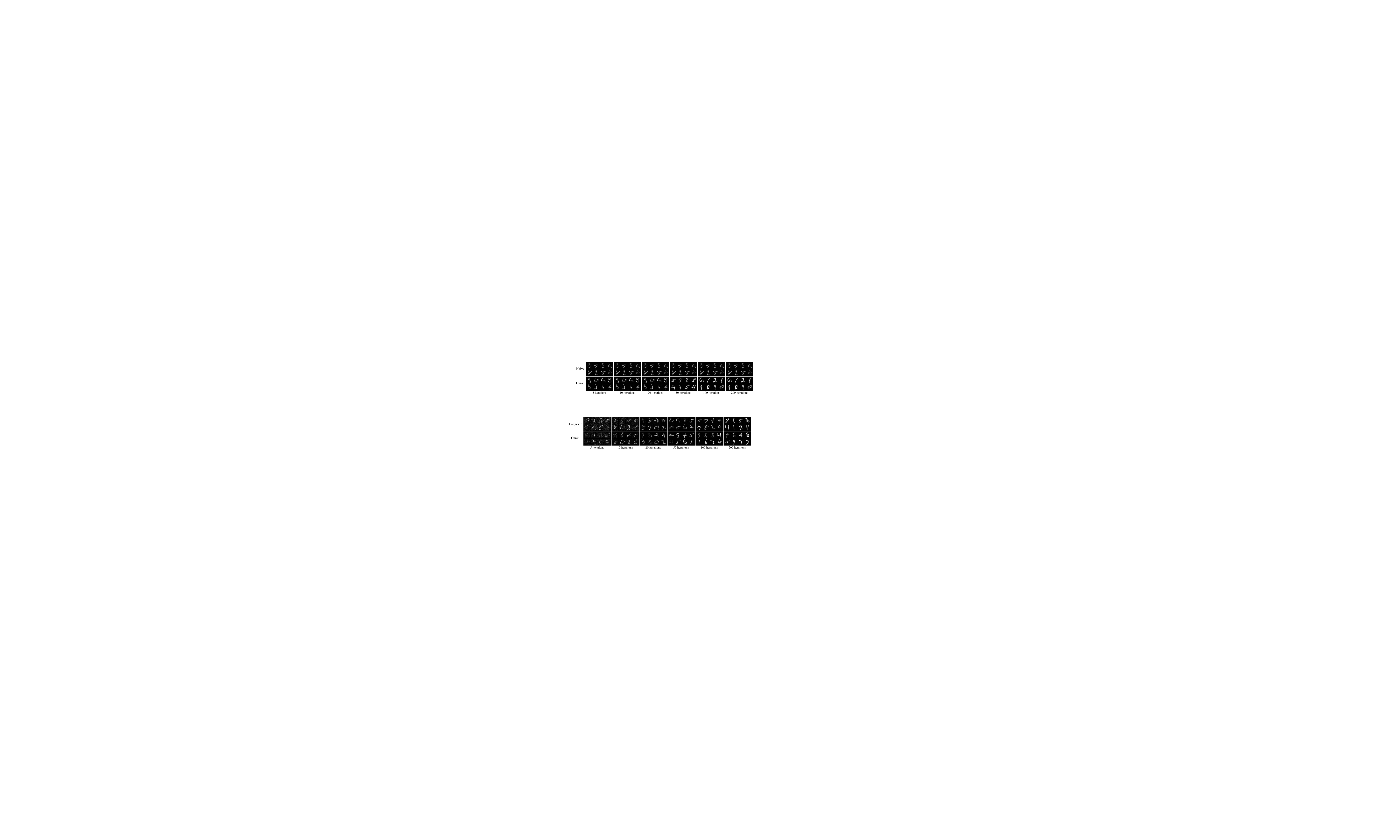}
    \caption{
    Sampling on MNIST.
    We observe that Ozaki sampling converges faster than Langevin dynamics.
    We use step size $\sigma=0.02$ and initialize the chain with Gaussian noise for both methods. 
    }
    \vspace{-1\baselineskip}
    \label{fig:mnist_samples}
\end{figure}

\begin{figure}[!t]
     \centering
     \begin{subfigure}[b]{0.26\textwidth}
         \centering
         \includegraphics[width=\textwidth]{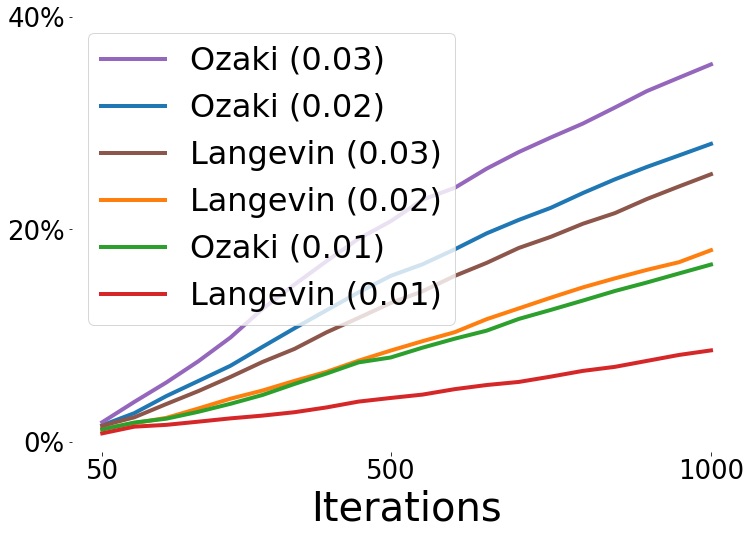}
         \caption{Percentage of changes in class label w.r.t. iterations.
         }
         \label{fig:mnist_fix_init_1}
     \end{subfigure}\hfill
     \begin{subfigure}[b]{0.68\textwidth}
         \centering
         \includegraphics[width=\textwidth]{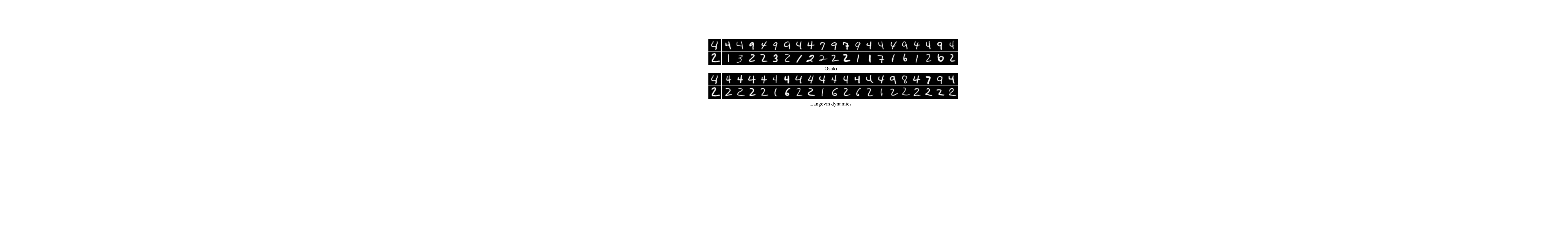}
         \caption{Different chains initialized with the same left panel image after 1000 iterations of update with step size $\eps=0.03$.}
         \label{fig:mnist_fix_init_2}
     \end{subfigure}
    \caption{Sample diversity analysis. The number in the parenthesis in \cref{fig:mnist_fix_init_1} denotes the step size. We initialize the chain with MNIST test images
    and report the percentage of images that have changed class labels from the initialization w.r.t. sampling iterations. We observe that Ozaki sampling has more diverse samples. 
    }
     \vspace{-1\baselineskip}
    \label{fig:mnist_fix_init}
\end{figure}

\subsection{Image datasets}
Ozaki discretization with learned $\tilde{\tens{s}}_2(\xt; \vtheta)$ produces more diverse samples and improve the mixing speed of Langevin dynamics on image datasets (see \cref{fig:mnist_samples})
\if0
As shown in \cite{song2019generative}, sampling from score models trained with DSM on image datasets is challenging when $\sigma$ is small  due to the ill-conditioned estimated scores in the low density region.
As our goal here is to show that our learned $\tilde{\tens{s}}_2(\xt;\vtheta)$ can be used to improve sample diversity and mixing speed than $\tilde{\tens{s}}_1(\xt;\vtheta)$ alone, we use a slightly larger $\sigma=0.5$ to avoid the issues of training $\tilde{\tens{s}}_1(\xt;\vtheta)$ discussed in~\cite{song2019generative}.
\chenlin{this part needs to be explained more.}
\fi
To see this, we select ten different digits from MNIST test set and initialize 1000 different sampling chains for each image.
We update the chains with Ozaki sampling
and report the percentage of images that have class label changes after a fixed number of sampling iterations in \cref{fig:mnist_fix_init_1}. We compare the results with Langevin dynamics with the same setting and observe that Ozaki sampling has more diverse samples within the same chain in a fixed amount of iterations.
We provide more details in Appendix~\ref{app:ozaki_sampling}.

\section{Conclusion}
We propose a method to directly estimate \emph{high order scores} of a data density from samples. We first study the connection between Tweedie's formula and denoising score matching (DSM) through the lens of least squares regression. 
We then leverage Tweedie’s formula on higher order moments, which allows us to generalize denoising score matching to estimate scores of any desired order. 
We demonstrate empirically that models trained with the proposed method can approximate second order scores more efficiently and accurately than applying automatic differentiation to a learned first order score model. In addition, we show that our models can be used to quantify uncertainty in denoising and to improve the mixing speed of Langevin dynamics via Ozaki discretization for sampling synthetic data and natural images.
Besides the applications studied in this paper, it would be  interesting to study the application of high order scores for out of distribution detection.
Due to limited computational resources, we only consider low resolution image datasets in this work. However, as a direct next step, we can apply our method to higher-resolution image datasets and explore its application to improve the sampling speed of score-based models~\cite{song2019generative,song2020improved,ho2020denoising} with Ozaki sampling. 
In general, when approximating the high-order scores  with a diagonal or a low rank matrix, our training cost is comparable to standard denoising score matching, which is scalable to higher dimensional data. A larger rank typically requires more computation but could give better approximations to second-order scores.
While we focused on images, this approach is likely applicable to other data modalities such as speech.

\section*{Acknowledgements}
The authors would like to thank Jiaming Song and Lantao Yu for constructive feedback.
This research was supported by NSF (\#1651565, \#1522054, \#1733686), ONR (N000141912145), AFOSR (FA95501910024), ARO (W911NF-21-1-0125) and Sloan Fellowship.

\bibliographystyle{plain}
\bibliography{gsgm}

\begin{thebibliography}{32}
\providecommand{\natexlab}[1]{#1}
\providecommand{\url}[1]{\texttt{#1}}
\expandafter\ifx\csname urlstyle\endcsname\relax
  \providecommand{\doi}[1]{doi: #1}\else
  \providecommand{\doi}{doi: \begingroup \urlstyle{rm}\Url}\fi

\bibitem[Bussi and Parrinello(2007)]{bussi2007accurate}
G.~Bussi and M.~Parrinello.
\newblock Accurate sampling using langevin dynamics.
\newblock \emph{Physical Review E}, 75\penalty0 (5):\penalty0 056707, 2007.

\bibitem[Dalalyan and Karagulyan(2019)]{dalalyan2019user}
A.~S. Dalalyan and A.~Karagulyan.
\newblock User-friendly guarantees for the langevin monte carlo with inaccurate
  gradient.
\newblock \emph{Stochastic Processes and their Applications}, 129\penalty0
  (12):\penalty0 5278--5311, 2019.

\bibitem[Dasgupta and Freund(2008)]{dasgupta2008random}
S.~Dasgupta and Y.~Freund.
\newblock Random projection trees and low dimensional manifolds.
\newblock In \emph{Proceedings of the fortieth annual ACM symposium on Theory
  of computing}, pages 537--546, 2008.

\bibitem[Efron(2011)]{efron2011tweedie}
B.~Efron.
\newblock Tweedie’s formula and selection bias.
\newblock \emph{Journal of the American Statistical Association}, 106\penalty0
  (496):\penalty0 1602--1614, 2011.

\bibitem[Girolami and Calderhead(2011)]{girolami2011riemann}
M.~Girolami and B.~Calderhead.
\newblock Riemann manifold langevin and hamiltonian monte carlo methods.
\newblock \emph{Journal of the Royal Statistical Society: Series B (Statistical
  Methodology)}, 73\penalty0 (2):\penalty0 123--214, 2011.

\bibitem[Ho et~al.(2020)Ho, Jain, and Abbeel]{ho2020denoising}
J.~Ho, A.~Jain, and P.~Abbeel.
\newblock Denoising diffusion probabilistic models.
\newblock \emph{arXiv preprint arXiv:2006.11239}, 2020.

\bibitem[Hutchinson(1989)]{hutchinson1989stochastic}
M.~F. Hutchinson.
\newblock A stochastic estimator of the trace of the influence matrix for
  laplacian smoothing splines.
\newblock \emph{Communications in Statistics-Simulation and Computation},
  18\penalty0 (3):\penalty0 1059--1076, 1989.

\bibitem[Hyv{\"a}rinen(2005)]{hyvarinen2005estimation}
A.~Hyv{\"a}rinen.
\newblock Estimation of non-normalized statistical models by score matching.
\newblock \emph{Journal of Machine Learning Research}, 6\penalty0
  (Apr):\penalty0 695--709, 2005.

\bibitem[Kong et~al.(2020)Kong, Ping, Huang, Zhao, and
  Catanzaro]{kong2020diffwave}
Z.~Kong, W.~Ping, J.~Huang, K.~Zhao, and B.~Catanzaro.
\newblock Diffwave: A versatile diffusion model for audio synthesis.
\newblock \emph{arXiv preprint arXiv:2009.09761}, 2020.

\bibitem[Krizhevsky et~al.(2009)Krizhevsky, Hinton,
  et~al.]{krizhevsky2009learning}
A.~Krizhevsky, G.~Hinton, et~al.
\newblock Learning multiple layers of features from tiny images.
\newblock 2009.

\bibitem[Martens and Grosse(2015)]{martens2015optimizing}
J.~Martens and R.~Grosse.
\newblock Optimizing neural networks with kronecker-factored approximate
  curvature.
\newblock In \emph{International conference on machine learning}, pages
  2408--2417. PMLR, 2015.

\bibitem[Mou et~al.(2019)Mou, Ma, Wainwright, Bartlett, and
  Jordan]{mou2019high}
W.~Mou, Y.-A. Ma, M.~J. Wainwright, P.~L. Bartlett, and M.~I. Jordan.
\newblock High-order langevin diffusion yields an accelerated mcmc algorithm.
\newblock \emph{arXiv preprint arXiv:1908.10859}, 2019.

\bibitem[Narayanan and Mitter(2010)]{narayanan2010sample}
H.~Narayanan and S.~Mitter.
\newblock Sample complexity of testing the manifold hypothesis.
\newblock In \emph{Proceedings of the 23rd International Conference on Neural
  Information Processing Systems-Volume 2}, pages 1786--1794, 2010.

\bibitem[Pang et~al.(2020)Pang, Xu, Li, Song, Ermon, and
  Zhu]{pang2020efficient}
T.~Pang, K.~Xu, C.~Li, Y.~Song, S.~Ermon, and J.~Zhu.
\newblock Efficient learning of generative models via finite-difference score
  matching.
\newblock \emph{arXiv preprint arXiv:2007.03317}, 2020.

\bibitem[Raphan and Simoncelli(2011)]{raphan2011least}
M.~Raphan and E.~P. Simoncelli.
\newblock Least squares estimation without priors or supervision.
\newblock \emph{Neural computation}, 23\penalty0 (2):\penalty0 374--420, 2011.

\bibitem[Robbins(2020)]{robbins2020empirical}
H.~Robbins.
\newblock \emph{An empirical Bayes approach to statistics}.
\newblock University of California Press, 2020.

\bibitem[Ronneberger et~al.(2015)Ronneberger, Fischer, and
  Brox]{ronneberger2015u}
O.~Ronneberger, P.~Fischer, and T.~Brox.
\newblock U-net: Convolutional networks for biomedical image segmentation.
\newblock In \emph{International Conference on Medical image computing and
  computer-assisted intervention}, pages 234--241. Springer, 2015.

\bibitem[Sabanis et~al.(2019)Sabanis, Zhang, et~al.]{sabanis2019higher}
S.~Sabanis, Y.~Zhang, et~al.
\newblock Higher order langevin monte carlo algorithm.
\newblock \emph{Electronic Journal of Statistics}, 13\penalty0 (2):\penalty0
  3805--3850, 2019.

\bibitem[Saremi and Hyvarinen(2019)]{saremi2019neural}
S.~Saremi and A.~Hyvarinen.
\newblock Neural empirical bayes.
\newblock \emph{Journal of Machine Learning Research}, 20:\penalty0 1--23,
  2019.

\bibitem[Saremi et~al.(2018)Saremi, Mehrjou, Sch{\"o}lkopf, and
  Hyv{\"a}rinen]{saremi2018deep}
S.~Saremi, A.~Mehrjou, B.~Sch{\"o}lkopf, and A.~Hyv{\"a}rinen.
\newblock Deep energy estimator networks.
\newblock \emph{arXiv preprint arXiv:1805.08306}, 2018.

\bibitem[Saul and Roweis(2003)]{saul2003think}
L.~K. Saul and S.~T. Roweis.
\newblock Think globally, fit locally: unsupervised learning of low dimensional
  manifolds.
\newblock \emph{Departmental Papers (CIS)}, page~12, 2003.

\bibitem[Song et~al.(2017)Song, Zhao, and Ermon]{song2017nice}
J.~Song, S.~Zhao, and S.~Ermon.
\newblock A-nice-mc: Adversarial training for mcmc.
\newblock \emph{arXiv preprint arXiv:1706.07561}, 2017.

\bibitem[Song and Ermon(2019)]{song2019generative}
Y.~Song and S.~Ermon.
\newblock Generative modeling by estimating gradients of the data distribution.
\newblock In \emph{Advances in Neural Information Processing Systems}, pages
  11918--11930, 2019.

\bibitem[Song and Ermon(2020)]{song2020improved}
Y.~Song and S.~Ermon.
\newblock Improved techniques for training score-based generative models.
\newblock \emph{arXiv preprint arXiv:2006.09011}, 2020.

\bibitem[Song and Kingma(2021)]{song2021train}
Y.~Song and D.~P. Kingma.
\newblock How to train your energy-based models.
\newblock \emph{arXiv preprint arXiv:2101.03288}, 2021.

\bibitem[Song et~al.(2019)Song, Garg, Shi, and Ermon]{song2019sliced}
Y.~Song, S.~Garg, J.~Shi, and S.~Ermon.
\newblock Sliced score matching: A scalable approach to density and score
  estimation.
\newblock \emph{arXiv preprint arXiv:1905.07088}, 2019.

\bibitem[Stein(1981)]{stein1981estimation}
C.~M. Stein.
\newblock Estimation of the mean of a multivariate normal distribution.
\newblock \emph{The annals of Statistics}, pages 1135--1151, 1981.

\bibitem[Stramer and Tweedie(1999)]{stramer1999langevin}
O.~Stramer and R.~Tweedie.
\newblock Langevin-type models i: Diffusions with given stationary
  distributions and their discretizations.
\newblock \emph{Methodology and Computing in Applied Probability}, 1\penalty0
  (3):\penalty0 283--306, 1999.

\bibitem[Vincent(2011)]{vincent2011connection}
P.~Vincent.
\newblock A connection between score matching and denoising autoencoders.
\newblock \emph{Neural computation}, 23\penalty0 (7):\penalty0 1661--1674,
  2011.

\bibitem[Wang et~al.(2020)Wang, Cheng, Yueru, Zhu, and
  Zhang]{wang2020wasserstein}
Z.~Wang, S.~Cheng, L.~Yueru, J.~Zhu, and B.~Zhang.
\newblock A wasserstein minimum velocity approach to learning unnormalized
  models.
\newblock In \emph{International Conference on Artificial Intelligence and
  Statistics}, pages 3728--3738. PMLR, 2020.

\bibitem[Welling and Teh(2011)]{welling2011bayesian}
M.~Welling and Y.~W. Teh.
\newblock Bayesian learning via stochastic gradient langevin dynamics.
\newblock In \emph{Proceedings of the 28th international conference on machine
  learning (ICML-11)}, pages 681--688. Citeseer, 2011.

\bibitem[Zhou et~al.(2020)Zhou, Shi, and Zhu]{zhou2020nonparametric}
Y.~Zhou, J.~Shi, and J.~Zhu.
\newblock Nonparametric score estimators.
\newblock In \emph{International Conference on Machine Learning}, pages
  11513--11522. PMLR, 2020.

\end{thebibliography}

\newpage
\appendix
\section{Related Work}
\label{app:related_work}
Existing methods for score estimation focus mainly on estimating the first order score of the data distribution.
For instance, score matching~\cite{hyvarinen2005estimation} approximates the first order score by minimizing the Fisher divergence between the data distribution and model distribution. Sliced score matching~\cite{song2019sliced} and finite-difference score matching~\cite{pang2020efficient} provide alternatives to estimating the first order score by approximating the score matching loss~\cite{hyvarinen2005estimation} using Hutchinson's trace estimator~\cite{hutchinson1989stochastic} and finite difference respectively.
Denoising score matching (DSM)~\cite{vincent2011connection} estimates the first order score of a noise perturbed data distribution by predicting the added perturbed "noise" 
given a noisy observation.
However, none of these methods can directly model and estimate higher order scores.
In this paper we study DSM from the perspective of Tweedie's formula and propose a method for estimating high order scores.
There are also other ways to derive DSM without using Tweedie's formula. For example, \cite{raphan2011least} provides a proof based on Bayesian least squares estimation. Stein's Unbiased Risk Estimator (SURE)~\citep{stein1981estimation} can also provide an alternative proof based on integration by parts. 
In contrast, our derivation, which leverages a general version of Tweedie's formula on high order moments of the posterior, can be  extended to directly learning high order scores.

\section{Proof}
\label{app:proof}
In the following, we assume that $q_{\sigma}(\xt|\x)=\mathcal{N}(\xt|\x,\sigma^2 I)$. Tweedie's formula can also be derived using the proof for Theorem~\ref{thm:second_order_dsm}.

\textbf{Theorem~\ref{thm:second_order_dsm}.}
\textit{
Given D-dimensional densities $p(\x)$ and $q_{\sigma}(\xt)\triangleq \int p(\x)q_{\sigma}(\xt|\x)d\x$, 
we have
\begin{align}
    &~\expec[\x\x\tran \mid \xt] = \tens{f}(\xt, \tilde{\tens{s}}_1, \tilde{\tens{s}}_2)\\
     &~\expec[\x\x\tran - \x\xt\tran - \xt\x\tran \mid \xt] = \tens{h}(\xt, \tilde{\tens{s}}_1, \tilde{\tens{s}}_2),
\end{align}
where $\tens{f}(\xt, \tilde{\tens{s}}_1, \tilde{\tens{s}}_2)$ and $\tens{h}(\xt, \tilde{\tens{s}}_1, \tilde{\tens{s}}_2)$ are polynomials of $\xt, \tilde{\tens{s}}_1(\xt), \tilde{\tens{s}}_2(\xt)$ defined as
\begin{align}
    &~\tens{f}(\xt, \tilde{\tens{s}}_1, \tilde{\tens{s}}_2)=\xt\xt\tran + \sigma^2\xt \tilde{\tens{s}}_1(\xt)\tran + \sigma^2\tilde{\tens{s}}_1(\xt)\xt\tran + \sigma^4 \tilde{\tens{s}}_2(\xt)+\sigma^4 \tilde{\tens{s}}_1(\xt)\tilde{\tens{s}}_1(\xt)\tran + \sigma^2I,\\
     &~\tens{h}(\xt, \tilde{\tens{s}}_1, \tilde{\tens{s}}_2) = -\xt \xt\tran + \sigma^4 \tilde{\tens{s}}_2(\xt) + \sigma^4 \tilde{\tens{s}}_1(\xt)\tilde{\tens{s}}_1(\xt)\tran + \sigma^2 I.
\end{align}
Here $\tilde{\tens{s}}_1(\xt)$ and $\tilde{\tens{s}}_2(\xt)$ denote the first and second order scores of $q_\sigma(\xt)$.
}

\begin{proof}
We can rewrite $q_{\sigma}(\xt|\x)$ in the form of exponential family
\begin{equation*}
    q_{\sigma}(\xt|\var\eta)=e^{\var\eta\tran\xt-\psi(\var\eta)}q_0(\xt),
\end{equation*}
where $\var\eta=\frac{\x}{\sigma^2}$ is the natural or canonical parameter of the family, $\psi(\var\eta)$ is the cumulant generating function which makes $q_{\sigma}(\xt|\var\eta)$ normalized and $q_0(\xt)=((2\pi)^d\sigma^{2d})^{-\frac{1}{2}}e^{-\frac{\xt\tran\xt}{2\sigma^2}}$.

Bayes rule provides the corresponding posterior
\begin{align*}
    q(\var\eta|\xt)&=\frac{q_{\sigma}(\xt|\var\eta)p(\var\eta)}{q_\sigma(\xt)}.
\end{align*}

Let $\lambda(\xt)=\log\frac{q_\sigma(\xt)}{q_0(\xt)}$, then we can write posterior as
\begin{equation*}
    q(\var\eta|\xt)=e^{\var\eta\tran\xt-\psi(\var\eta)-\lambda(\xt)}p(\var\eta).
\end{equation*}

Since the posterior is normalized, we have
\begin{equation*}
    \int e^{\var\eta\tran\xt-\psi(\var\eta)-\lambda(\xt)}p(\var\eta)d\var\eta=1.
\end{equation*}

As a widely used technique in exponential families, we differentiate both sides w.r.t. $\xt$
\begin{equation*}
    \int(\var\eta\tran-\mJ_\lambda(\xt)\tran)q(\var\eta|\xt)d\var\eta=0,
\end{equation*}
and the first order posterior moment can be written as
\begin{align}
\label{eq: first order posterior moment a}
    \expec[\var\eta\mid\xt]&=\mJ_\lambda(\xt) \\
\label{eq: first order posterior moment b}
    \expec[\var\eta\tran\mid\xt]&=\mJ_\lambda(\xt)\tran,
\end{align}
where $\mJ_\lambda(\xt)$ is the Jacobian of $\lambda(\xt)$ w.r.t. $\xt$.

Differentiating both sides w.r.t. $\xt$ again
\begin{equation*}
    \int\var\eta(\var\eta\tran-\mJ_\lambda(\xt)\tran)q(\var\eta|\xt)d\var\eta=\mH_\lambda(\xt),
\end{equation*}
and the second order posterior moment can be written as
\begin{align}
\label{eq: second order posterior moment}
    \expec[\var\eta\var\eta\tran\mid\xt]=\mH_\lambda(\xt)+\mJ_\lambda(\xt)\mJ_\lambda(\xt)\tran,
\end{align}
where $\mH_\lambda(\xt)$ is the Hessian of $\lambda(\xt)$  w.r.t. $\xt$.

Specifically, for $q_{\sigma}(\xt|\x)=\mathcal{N}(\xt|\x,\sigma^2 I)$, we have $\var\eta=\frac{\x}{\sigma^2}$ and $q_0(\xt)=((2\pi)^d\sigma^{2d})^{-\frac{1}{2}}e^{-\frac{\xt\tran\xt}{2\sigma^2}}$. Hence we have
\begin{align*}
    \lambda(\xt)&=\log q_\sigma(\xt)+\frac{\xt\tran\xt}{2\sigma^2}+\text{constant} \\
    \mJ_\lambda(\xt)&=\tilde{\tens{s}}_1(\xt)+\frac{\xt}{\sigma^2} \\
    \mH_\lambda(\xt)&=\tilde{\tens{s}}_2(\xt)+\frac{1}{\sigma^2}I.
\end{align*}

From \cref{eq: second order posterior moment}, we have
\begin{align*}
\resizebox{\hsize}{!}{
    $\displaystyle
    \expec[\x\x\tran \mid \xt] = \xt\xt\tran + \sigma^2\xt \tilde{\tens{s}}_1(\xt)\tran + \sigma^2\tilde{\tens{s}}_1(\xt)\xt\tran + \sigma^4 \tilde{\tens{s}}_2(\xt)+\sigma^4 \tilde{\tens{s}}_1(\xt)\tilde{\tens{s}}_1(\xt)\tran + \sigma^2I=\tens{f}(\xt, \tilde{\tens{s}}_1, \tilde{\tens{s}}_2).$
}
\end{align*}
Combined with \cref{eq: first order posterior moment a}, \cref{eq: first order posterior moment b}, and \cref{eq: second order posterior moment}, we have
\begin{align*}
\resizebox{0.92\hsize}{!}{
    $\displaystyle
   \expec[\x\x\tran - \x\xt\tran - \xt\x\tran \mid \xt] = -\xt \xt\tran + \sigma^4 \tilde{\tens{s}}_2(\xt) + \sigma^4 \tilde{\tens{s}}_1(\xt)\tilde{\tens{s}}_1(\xt)\tran + \sigma^2 I=\tens{h}(\xt, \tilde{\tens{s}}_1, \tilde{\tens{s}}_2).$
 }
\end{align*}
\end{proof}

\textbf{Tweedie's formula.}
\textit{ Given D-dimensional densities $p(\x)$ and $q_{\sigma}(\xt)\triangleq \int p(\x)q_{\sigma}(\xt|\x)d\x$, 
we have
\begin{equation}
    \expec[\x\mid \xt] = \xt + \sigma^2 \tilde{\tens{s}}_{1}(\xt),
\end{equation}
where $\tilde{\tens{s}}_{1}(\xt)\triangleq \nabla_{\xt} \log q_{\sigma}(\xt)$.}
\begin{proof}
 Plug in $\var\eta=\frac{\x}{\sigma^2}$ and $\mJ_\lambda(\xt)=\tilde{\tens{s}}_1(\xt)+\frac{\xt}{\sigma^2}$ in \cref{eq: first order posterior moment a}, we have  
\begin{equation}
    \expec[\x\mid\xt]=\xt+\sigma^2\tilde{\tens{s}}_1(\xt),
\end{equation}
which proves Tweedie's formula.
\end{proof}

\textbf{Theorem~\ref{eq:second_dsm_loss_theorem}.}
\textit{
Suppose the first order score $\tilde{\tens{s}}_1(\xt)$ is given, we can learn a second order score model $\tilde{\tens{s}}_2(\xt; \vtheta)$ by optimizing the following objectives
\begin{align*}
    &~\vtheta^\ast =\argmin_\vtheta \expec_{p_{\text{data}}(\x)}\expec_{ q_{\sigma}(\xt|\x)}\bigg[\Big\Vert{ \x\x\tran-\tens{f}(\xt,\tilde{\tens{s}}_1(\xt), \tilde{\tens{s}}_2(\xt;\vtheta))}\Big\Vert_2^2\bigg],\\
    &~\vtheta^\ast =\argmin_\vtheta \expec_{p_{\text{data}}(\x)}\expec_{ q_{\sigma}(\xt|\x)}\bigg[\Big\Vert{ \x\x\tran - \x\xt\tran - \xt\x\tran - \tens{h}(\xt,\tilde{\tens{s}}_1(\xt), \tilde{\tens{s}}_2(\xt;\vtheta))}\Big\Vert_2^2\bigg]
\end{align*}
where $\tens{f}(\cdot)$ and $\tens{h}(\cdot)$ are polynomials defined in \cref{eq: second order multivariate expectation} and \cref{corollary:s2_least_square}.
Assuming the model has an infinite capacity, then the optimal parameter $\vtheta^\ast$ satisfies $\tilde{\tens{s}}_2(\xt;\vtheta^\ast)=\tilde{\tens{s}}_2(\xt)$ for almost any $\xt$.}
\begin{proof}
It is well-known that the optimal solution to the least squares regression problems of \cref{eq:s2_least_square_naive} and \cref{eq:s2_least_square} are the conditional expectations $\tens{f}(\xt,\tilde{\tens{s}}_1(\xt),\tilde{\tens{s}}_2(\xt;\vtheta^{*}))=\expec[\x\x\tran \mid \xt]$ and $\tens{h}(\xt,\tilde{\tens{s}}_1(\xt),\tilde{\tens{s}}_2(\xt;\vtheta^\ast))=\expec[ \x\x\tran - \x\xt\tran - \xt\x\tran \mid \xt]$ respectively. According to Theorem~\ref{thm:second_order_dsm}, this implies that the optimal solution satisfies $\tilde{\tens{s}}_2(\xt;\vtheta^\ast)=\tilde{\tens{s}}_2(\xt)$ for almost any $\xt$ given the first order score $\tilde{\tens{s}}_1(\xt)$.

 \textbf{Note:} \cref{eq:s2_least_square_naive} and \cref{eq:s2_least_square} have the same set of solutions assuming sufficient model capacity.
 However, \cref{eq:s2_least_square} has a simpler form (e.g., involving fewer terms) than \cref{eq:s2_least_square_naive} since multiple terms in \cref{eq:s2_least_square} can be cancelled after expanding the equation by using \cref{eq:twiddie_formula} (Tweedie's formula), resulting in the simplified objective \cref{eq:second_dsm_loss}. Compared to the expansion of \cref{eq:s2_least_square_naive}, the expansion of \cref{eq:s2_least_square} (i.e., \cref{eq:second_dsm_loss}) is much simpler (i.e., involving fewer terms), which is why we use \cref{eq:s2_least_square} other than \cref{eq:s2_least_square_naive} in our experiments.
\end{proof}

Before proving \cref{thm:high_order_dsm}, we first prove the following lemma.

\begin{lemma}
\label{app:lemma:high_order_moment}
Given a $D$ dimensional distribution $p_{\text{data}}(\x)$, 
and $q_{\sigma}(\tilde \x|\x) \triangleq \mathcal{N}(\tilde \x|\x,\sigma^2 I)$,  we have the following for any integer $n\ge1$:
\begin{equation*}
    \expec[\otimes^{n+1} \x|\xt]=\sigma^2\frac{\partial}{\partial \xt}\expec[\otimes^{n}\x|\xt]+\sigma^2\expec[\otimes^{n}\x|\xt]\otimes\bigg(\tilde{\tens{s}}_1(\xt)+\frac{\xt}{\sigma^2}\bigg),
\end{equation*}
where $\otimes^{n}\x\in \mathbb{R}^{D^n}$ denotes $n$-fold tensor multiplications.
\end{lemma}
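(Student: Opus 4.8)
The plan is to write the conditional moment tensor as a ratio whose numerator is an un-normalized integral against the Gaussian kernel, differentiate that integral under the sign using the Gaussian score identity, and then re-express everything in terms of the moment tensor and its $\xt$-derivative. Concretely, I would set $\tens{g}_n(\xt) \triangleq \int (\otimes^{n}\x)\, q_{\sigma}(\xt|\x)\, p_{\text{data}}(\x)\, d\x$, so that by Bayes' rule $\expec[\otimes^{n}\x|\xt] = \tens{g}_n(\xt)/q_{\sigma}(\xt)$, since $q_{\sigma}(\xt)=\int q_{\sigma}(\xt|\x)p_{\text{data}}(\x)\,d\x$ is exactly the normalizing factor of the posterior.

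First I would differentiate $\tens{g}_n$ with respect to $\xt$. The only $\xt$-dependent factor in the integrand is the Gaussian kernel, for which $\frac{\partial}{\partial\xt}q_{\sigma}(\xt|\x)=\frac{\x-\xt}{\sigma^2}q_{\sigma}(\xt|\x)$. Pulling the derivative inside (justified by dominated convergence, using the rapid Gaussian tail decay together with mild integrability of the moments of $p_{\text{data}}$) appends one tensor index and gives $\frac{\partial}{\partial\xt}\tens{g}_n=\frac{1}{\sigma^2}\int (\otimes^{n}\x)\otimes(\x-\xt)\,q_{\sigma}(\xt|\x)p_{\text{data}}(\x)\,d\x$. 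Splitting the $(\x-\xt)$ factor and noting that $\xt$ is constant under the integral yields the clean recursion $\frac{\partial}{\partial\xt}\tens{g}_n=\frac{1}{\sigma^2}\big(\tens{g}_{n+1}-\tens{g}_n\otimes\xt\big)$, i.e. $\tens{g}_{n+1}=\sigma^2\frac{\partial}{\partial\xt}\tens{g}_n+\tens{g}_n\otimes\xt$.

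The final step converts derivatives of $\tens{g}_n$ into derivatives of the moment itself. Writing $\tens{g}_n=q_{\sigma}\,\expec[\otimes^{n}\x|\xt]$ and applying the product rule (again placing the newly introduced index in the trailing slot) gives $\frac{\partial}{\partial\xt}\tens{g}_n=q_{\sigma}\big(\frac{\partial}{\partial\xt}\expec[\otimes^{n}\x|\xt]+\expec[\otimes^{n}\x|\xt]\otimes\tilde{\tens{s}}_1(\xt)\big)$, where I used $\frac{\partial}{\partial\xt}q_{\sigma}=q_{\sigma}\,\tilde{\tens{s}}_1(\xt)$ from the definition of the first order score. Substituting this into the recursion for $\tens{g}_{n+1}$, using $\tens{g}_{n+1}=q_{\sigma}\expec[\otimes^{n+1}\x|\xt]$ and $\tens{g}_n\otimes\xt=q_{\sigma}\,\expec[\otimes^{n}\x|\xt]\otimes\xt$, and dividing through by $q_{\sigma}(\xt)$ gives $\expec[\otimes^{n+1}\x|\xt]=\sigma^2\frac{\partial}{\partial\xt}\expec[\otimes^{n}\x|\xt]+\sigma^2\,\expec[\otimes^{n}\x|\xt]\otimes\big(\tilde{\tens{s}}_1(\xt)+\frac{\xt}{\sigma^2}\big)$, which is exactly the claim.

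I expect the principal difficulty to be notational rather than conceptual: maintaining consistent tensor-index bookkeeping, and in particular ensuring that the new index created by $\partial/\partial\xt$ occupies the same trailing position as the new index created by the $\otimes$ operations, so that the product-rule term and the pure-derivative term recombine correctly after dividing by $q_{\sigma}$. A secondary, purely technical point is justifying the interchange of differentiation and integration, which follows from the smoothness and tail decay of the Gaussian kernel together with the finiteness of the relevant moments of $p_{\text{data}}$.
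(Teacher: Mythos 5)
Your proof is correct and is essentially the same argument as the paper's: both differentiate the posterior-moment integral with respect to $\xt$, exploit the fact that $\partial_{\xt}\log q_{\sigma}(\xt|\x)$ is affine in $\x$, and thereby obtain the recursion with the new index in the trailing slot. The only cosmetic difference is that you work with the unnormalized numerator $\tens{g}_n$ and apply the product rule against $q_\sigma(\xt)$ at the end, whereas the paper folds the normalization into the exponential-family posterior (via $e^{-\lambda(\xt)}$ with natural parameter $\var\eta=\x/\sigma^2$) and differentiates the normalized expectation directly.
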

\begin{proof}
We follow the notation used in the previous proof. Since
\begin{equation*}
    \expec[\otimes^{n}\var\eta|\xt]=\int e^{\var\eta\tran\xt-\psi(\var\eta)-\lambda(\xt)}p(\var\eta)\otimes^{n}\var\eta d\var\eta,
\end{equation*}
differentiating both sides w.r.t. $\xt$
\begin{align*}
    \frac{\partial}{\partial \xt}\expec[\otimes^{n}\var\eta|\xt]&=\int e^{\var\eta\tran\xt-\psi(\var\eta)-\lambda(\xt)}p(\var\eta)\otimes^{n+1}\var\eta d\var\eta-\int e^{\var\eta\tran\xt-\psi(\var\eta)-\lambda(\xt)}p(\var\eta)\otimes^{n}\var\eta d\var\eta\otimes\frac{\partial}{\partial \xt}\lambda(\xt) \\
    \frac{\partial}{\partial \xt}\expec[\otimes^{n}\var\eta|\xt]&=\expec[\otimes^{n+1}\var\eta|\xt]-\expec[\otimes^{n}\var\eta|\xt]\otimes\bigg(\tilde{\tens{s}}_1(\xt)+\frac{\xt}{\sigma^2}\bigg).
\end{align*}
Thus
\begin{equation*}
    \expec[\otimes^{n+1} \x|\xt]=\sigma^2\frac{\partial}{\partial \xt}\expec[\otimes^{n}\x|\xt]+\sigma^2\expec[\otimes^{n}\x|\xt]\otimes\bigg(\tilde{\tens{s}}_1(\xt)+\frac{\xt}{\sigma^2}\bigg).
\end{equation*}
\end{proof}
\paragraph{Example}
When $n=2$,  plug in \cref{eq:twiddie_formula}, we have
$$\expec[\x\x\tran|\xt]=\sigma^2(I+\sigma^2 \tilde{\tens{s}}_2(\xt))+\sigma^2(\xt+\sigma^2 \tilde{\tens{s}}_1(\xt))(\tilde{\tens{s}}_1(\xt)+\frac{\xt}{\sigma^2})\tran,$$
which can be simplified as \cref{eq: second order multivariate expectation}.

\cref{app:lemma:high_order_moment} provides a recurrence for obtaining $\tens{f}_n$ in closed form. It is further used and discussed in Theorem~\ref{thm:high_order_dsm}.

\textbf{Theorem~\ref{thm:high_order_dsm}.}
\textit{
$\expec[\otimes^{n}\x|\xt]=\tens{f}_n(\xt, \tilde{\tens{s}}_1, ..., \tilde{\tens{s}}_{n})$, where $\otimes^{n}\x\in \mathbb{R}^{D^n}$ denotes $n$-fold tensor multiplications,
$\tens{f}_n(\xt, \tilde{\tens{s}}_1, ..., \tilde{\tens{s}}_{n})$ is a polynomial of $\{\xt, \tilde{\tens{s}}_1(\xt), ..., \tilde{\tens{s}}_{n}(\xt)\}$ and $\tilde{\tens{s}}_k(\x)$ represents the $k$-th order score of $q_{\sigma}(\xt)=\int p_{\text{data}}(\x)q_{\sigma}(\tilde \x|\x)d\x$.
}

\begin{proof}
We prove this using induction. When $n=1$, we have
\begin{equation*}
    \expec[\x|\xt]=\sigma^2\tilde{\tens{s}}_1(\xt)+\xt.
\end{equation*}
Thus, $\expec[\x|\xt]$ can be written as a polynomial of $\{\xt,\tilde{\tens{s}}_1(\xt)\}$. The hypothesis holds.

Assume the hypothesis holds when $n=t$, then
\begin{equation*}
    \expec[\otimes^{t}\x|\xt]=\tens{f}_t(\xt, \tilde{\tens{s}}_1, ..., \tilde{\tens{s}}_{t}).
\end{equation*}

When $n=t+1$,
\begin{align*}
    \expec[\otimes^{t+1} \x|\xt]&=\sigma^2\frac{\partial}{\partial \xt}\expec[\otimes^{t}\x|\xt]+\sigma^2\expec[\otimes^{t}\x|\xt]\otimes\bigg(\tilde{\tens{s}}_1(\xt)+\frac{\xt}{\sigma^2}\bigg) \\
    &=\sigma^2\frac{\partial}{\partial \xt}\tens{f}_t(\xt, \tilde{\tens{s}}_1, ..., \tilde{\tens{s}}_{t})+\sigma^2\tens{f}_t(\xt, \tilde{\tens{s}}_1, ..., \tilde{\tens{s}}_{t})\otimes\bigg(\tilde{\tens{s}}_1(\xt)+\frac{\xt}{\sigma^2}\bigg).
\end{align*}
Clearly, $\sigma^2\tens{f}_t(\xt, \tilde{\tens{s}}_1, ..., \tilde{\tens{s}}_{t})\otimes\bigg(\tilde{\tens{s}}_1(\xt)+\frac{\xt}{\sigma^2}\bigg)$ is a polynomial of $\{\xt, \tilde{\tens{s}}_1(\xt), ..., \tilde{\tens{s}}_{t}(\xt)\}$, and $\sigma^2\frac{\partial}{\partial \xt}\tens{f}_t(\xt, \tilde{\tens{s}}_1, ..., \tilde{\tens{s}}_{t})$ is a polynomial of $\{\xt, \tilde{\tens{s}}_1(\xt), ..., \tilde{\tens{s}}_{t+1}(\xt)\}$. This implies $\expec[\otimes^{t+1} \x|\xt]$ can be written as $\tens{f}_{t+1}(\xt, \tilde{\tens{s}}_1, ..., \tilde{\tens{s}}_{t+1})$, which is a polynomial of $\{\xt, \tilde{\tens{s}}_1(\xt), ..., \tilde{\tens{s}}_{t+1}(\xt)\}$. Thus, the hypothesis holds when $k=t+1$, which implies that the hypothesis holds for all integer $n\geq1$.
\end{proof}

\textbf{Theorem~\ref{thm:high_order_dsm_general_objective}.}
\textit{
Given the true score functions $\tilde{\tens{s}}_1(\xt),...,\tilde{\tens{s}}_{k-1}(\xt)$, a $k$-th order score model $\tilde{\tens{s}}_k(\xt;\vtheta)$, and
\begin{equation}
\label{app:eq:high_order_dsm_least_square}
    \vtheta^{*}=\argmin_\vtheta  \expec_{p_{\text{data}}(\x)}\expec_{q_{\sigma}(\xt|\x)}[\Vert \otimes^{k}\x -\tens{f}_k(\xt, \tilde{\tens{s}}_1(\xt), ..., \tilde{\tens{s}}_{k-1}(\xt), \tilde{\tens{s}}_k(\xt;\vtheta))\Vert^2].
\end{equation}
 Assuming the model has an infinite capacity, we have $\tilde{\tens{s}}_{k}(\xt;\vtheta^{*})=\tilde{\tens{s}}_k(\xt)$ for almost all $\xt$.
}
\begin{proof}
Similar to the previous case, we can show that the solution to the least squares regression problems of  \cref{app:eq:high_order_dsm_least_square} is $\tens{f}_k(\xt, \tilde{\tens{s}}_1(\xt), ..., \tilde{\tens{s}}_{k-1}(\xt), \tilde{\tens{s}}_k(\xt;\vtheta)^{*})=\expec[\otimes^{t}\x|\xt]$. According to \cref{thm:high_order_dsm}, this implies $\tilde{\tens{s}}_k(\xt;\vtheta^{*})=\tilde{\tens{s}}_k(\xt)$ given the score functions $\tilde{\tens{s}}_1(\xt),...,\tilde{\tens{s}}_{k-1}(\xt)$.
\end{proof}

\section{Analysis on Second Order Score Models}
\label{app:analysis}

\subsection{Variance reduction}
\label{app:variance_reduction}
If we want to match the score of true distribution $p_{\text{data}}(\x)$, $\sigma$ should be approximately zero for both DSM and \method{} so that $q_{\sigma}(\xt)$ is close to $p_{\text{data}}(\x)$. However, when $\sigma\to0$, both DSM and \method{} can be unstable to train and might not converge, which calls for variance reduction techniques. In this section, we show that we can introduce a control variate to improve the empirical performance of DSM and \method{} when $\sigma$ tends to zero. Our variance control method can be derived from expanding the original training objective function using Taylor expansion.

\textbf{DSM with varaince reduction}\:
Expand the objective using Taylor expansion
\begin{align*}
     \mathcal{L}_{DSM}(\vtheta) &=\frac{1}{2}\expec_{p_{\text{data}}(\x)}\expec_{q_\sigma(\xt \mid \x)}\bigg[\Big\Vert  \tilde{\tens{s}}_{1}(\xt;\vtheta) + \frac{1}{\sigma^2}(\xt-\x) \Big\Vert_2^2\bigg]\\
     &= \frac{1}{2}\expec_{p_{\text{data}}(\x)}\expec_{\z\sim \mathcal{N}(0,I)}\bigg[\Big\Vert \tilde{\tens{s}}_1(\x+\sigma\z;\vtheta)+\frac{\z}{\sigma} \Big\Vert_2^2\bigg] \\
        &= \frac{1}{2}\expec_{p_{\text{data}}(\x)}\expec_{\z\sim \mathcal{N}(0,I)}\bigg[\Vert \tilde{\tens{s}}_1(\x+\sigma\z; \vtheta) \Vert_2^2 + \frac{2}{\sigma} \tilde{\tens{s}}_1(\x+\sigma\z;\vtheta)^T\z + \frac{\Vert \z \Vert_2^2}{\sigma^2}\bigg] \\
        &= \frac{1}{2}\expec_{p_{\text{data}}(\x)}\expec_{\z\sim \mathcal{N}(0,I)}\bigg[\Vert \tilde{\tens{s}}_1(\x; \vtheta) \Vert_2^2 + \frac{2}{\sigma}\tilde{\tens{s}}_1(\x; \vtheta)^T\z + \frac{\Vert \z \Vert_2^2}{\sigma^2}\bigg] + \mathcal{O}(1),
\end{align*}
where $\mathcal{O}(1)$ is bounded when $\sigma\to0$. 
Since
\begin{equation}
\label{app:eq:control_variate}
    \expec_{\z\sim\mathcal{N}(0,I)} [\frac{2}{\sigma}\tilde{\tens{s}}_1(\x;\vtheta)^T\z + \frac{\Vert \z \Vert_2^2}{\sigma^2}] = \frac{D}{\sigma^2},
\end{equation}
where $D$ is the dimension of $p_{\text{data}}(\x)$,
we can use \cref{app:eq:control_variate} as a control variate and define DSM with variance reduction as
\begin{align}
\label{app:eq:dsm_vr}
    \displaystyle\mathcal{L}_{DSM-VR} &= \mathcal{L}_{DSM} - \expec_{p_{\text{data}}(\x)}\expec_{\z\sim\mathcal{N}(0,I)}[\frac{2}{\sigma}\tilde{\tens{s}}_1(\x;\vtheta)^T\z + \frac{\Vert \z \Vert_2^2}{\sigma^2}] + \frac{D}{\sigma^2} 
\end{align}
An equivalent version of \cref{app:eq:dsm_vr} is first proposed in \cite{wang2020wasserstein}.

\textbf{\method{} with variance reduction}\:
We now derive the variance reduction objective for \method.
Let us first consider the $ij$-th term of $\gL_{\text{\method}}(\vtheta)$. We denote $\tens{\psi}(\xt;\vtheta)=\tilde{\tens{s}}_2(\xt;\vtheta)+\tilde{\tens{s}}_1(\xt;\vtheta)\tilde{\tens{s}}_1(\xt;\vtheta)\tran$ and $\tens{\psi}_{ij}(\xt;\vtheta)$ the $ij$-th term of $\tens{\psi}(\xt;\vtheta)$. Similar as the variance reduction method for DSM~\cite{wang2020wasserstein}, we expand the objective of \method{} (\cref{eq:second_dsm_loss}) using Taylor expansion
\begin{align*}
    &\gL_{\text{\method}}(\vtheta)_{ij}=
    \frac{1}{2}\expec_{p_{\text{data}}(\x)}\expec_{\z\sim \mathcal{N}(0,I)}[\tens{\psi}_{ij}(\x+\sigma\z;\vtheta) + \frac{\I_{ij}-z_iz_j}{\sigma^2}]^2 \\
    &= \frac{1}{2}\expec_{p_{\text{data}}(\x)}\expec_{\z\sim \mathcal{N}(0,I)}[ \tens{\psi}_{ij}(\x+\sigma\z;\vtheta)^2 + 2\frac{\I_{ij}-z_iz_j}{\sigma^2}\tens{\psi}_{ij}(\x+\sigma\z;\vtheta) + \frac{(\I_{ij}-z_iz_j)^2}{\sigma^4}] \\
    &\resizebox{0.95\hsize}{!}{=
    $\displaystyle
    \frac{1}{2}\expec_{p_{\text{data}}(\x)}\expec_{\z\sim \mathcal{N}(0,I)}[\tens{\psi}_{ij}(\x;\vtheta)^2 + 2\frac{\I_{ij}-z_iz_j}{\sigma^2}\tens{\psi}_{ij}(\x;\vtheta) + 2\frac{\I_{ij}-z_iz_j}{\sigma}\Jacobian_{\tens{\psi}_{ij}}\z + \frac{(\I_{ij}-z_iz_j)^2}{\sigma^4}] + \mathcal{O}(1)$},
\end{align*}
where $\mathcal{O}(1)$ is bounded when $\sigma\to0$.
It is clear to see that the term $\frac{(\I_{ij}-z_iz_j)^2}{\sigma^4}$ is a constant w.r.t. optimization.
When $\sigma$ approximates zero, both $\frac{\I_{ij}-z_iz_j}{\sigma^2}$ and $\frac{\I_{ij}-z_iz_j}{\sigma}$ would be very large, making the training process unstable and hard to converge. Thus we aim at designing a control variate to cancel out these two terms. To do this, we propose to use antithetic sampling. Instead of using independent noise samples, we use two correlated (opposite) noise vectors centered at $\x$ defined as $\xt_{+}=\xt+\sigma\z$ and $\xt_{-}=\xt-\sigma\z$.
We propose the following objective function to reduce variance
\begin{equation}
\label{app:eq:second_order_dsm_vr}
\resizebox{0.92\hsize}{!}{
    $\displaystyle\mathcal{L}_{\text{\method-VR}} = \expec_{\x\sim p_{\text{data}}(\x)}\expec_{\z\sim \mathcal{N}(0,I)}\bigg[\tens{\psi}(\xt_{+};\vtheta)^2+\tens{\psi}(\xt_{-};\vtheta)^2
    +2\frac{\I-\z\z\tran}{\sigma}\odot(\tens{\psi}(\xt_{+};\vtheta) + \tens{\psi}(\xt_{-};\vtheta) -2\tens{\psi}(\x;\vtheta))\bigg]$.}
\end{equation}
Similarly, we can show easily by using Taylor expansion that optimizing \cref{app:eq:second_order_dsm_vr} is equivalent to optimizing \cref{eq:second_dsm_loss} up to a control variate. On the other hand, \cref{app:eq:second_order_dsm_vr} is more stable to optimize than \cref{eq:second_dsm_loss} when $\sigma$ approximates zero since the unstable terms $\frac{\I_{ij}-z_iz_j}{\sigma^2}$ and $\frac{\I_{ij}-z_iz_j}{\sigma}$ are both cancelled by the introduced control variate.

\subsection{Learning accuracy}
\label{app:l2}
This section provides more experimental details on Section~\ref{sec:l2}.
We use a 3-layer MLP model with latent size 128 and Tanh activation function for $\tilde{\tens{s}}_1(\xt;\vtheta)$. As discussed in \cref{sec:low_rank_parameterization}, our $\tilde{\tens{s}}_2(\xt;\vtheta)$ model consists of two parts $\bm{\alpha}(\xt;\vtheta)$ and $\bm{\beta}(\xt;\vtheta)$. 
We also use a 3-layer MLP model with latent size 32 and Tanh activation function to parameterize $\bm{\alpha}(\xt;\vtheta)$ and $\bm{\beta}(\xt;\vtheta)$.
For the mean squared error diagonal comparison experiments, we only parameterize the diagonal component $\bm{\alpha}(\xt;\vtheta)$. We use a 3-layer MLP model with latent size 32, and Tanh activation function to parameterize $\bm{\alpha}(\xt;\vtheta)$.
We use learning rate $0.001$, and train the models using Adam optimizer until convergence. We use noise scale $\sigma=0.01, 0.05, 0.1$ during training so that the noise perturbed distribution $q_{\sigma}(\xt)$ is close to $p_{\text{data}}(\x)$. All the mean squared error results in \cref{tab:mse_synthetic} are computed w.r.t. to the ground truth second order score of the clean data $p_{\text{data}}(\x)$. The experiments are performed on 1 GPU.

\subsection{Computational efficiency}
\label{app:efficiency}
This section provides more experimental details on the computational efficiency experiments in \cref{sec:efficiency}. In the experiment, we consider two types of models.
\paragraph{MLP model}
We use a 3-layer MLP model to parameterize $\tilde{\tens{s}}_1(\xt;\vtheta)$ for a 100 dimensional data distribution. As discussed in \cref{sec:low_rank_parameterization}, our $\tilde{\tens{s}}_2(\xt;\vtheta)$ model consists of two parts $\bm{\alpha}(\xt;\vtheta)$ and $\bm{\beta}(\xt;\vtheta)$. We use a 3-layer MLP model with comparable amount of parameters as $\tilde{\tens{s}}_1(\xt;\vtheta)$ to parameterize $\bm{\alpha}(\xt;\vtheta)$ and $\bm{\beta}(\xt;\vtheta)$. We consider rank $r=20, 50, 200$ and $1000$ for $\bm{\beta}(\xt;\vtheta)$ in the experiment as reported in \cref{tab:efficiency}.

\paragraph{U-Net model}
We use a U-Net model to parameterize $\tilde{\tens{s}}_1(\xt;\vtheta)$ for the $784$ dimensional data distribution. We use a similar U-Net architecture as $\tilde{\tens{s}}_1(\xt;\vtheta)$ for parameterizing $\bm{\alpha}(\xt;\vtheta)$ and $\bm{\beta}(\xt;\vtheta)$, except that we modify the output channel size to match the rank $r$ of $\bm{\beta}(\xt;\vtheta)$. We consider rank $r=20, 50, 200$ and $1000$ for $\bm{\beta}(\xt;\vtheta)$ in the experiment as reported in \cref{tab:efficiency}.
All the experiments are performed on the same  TITAN Xp GPU using exactly the same computational setting.
We use the implementation of U-Net from this repository \url{https://github.com/ermongroup/ncsn}.

\section{Uncertainty Quantification}
\label{app:uncertainty}
This section provides more experimental details on \cref{sec:uncertainty}.
\subsection{Synthetic experiments}
\label{app:denoising}
This section provides more details on the synthetic data experiments.
We use a 3-layer MLP model for both $\tilde{\tens{s}}_1(\xt;\vtheta)$ and $\tilde{\tens{s}}_2(\xt;\vtheta)$. We train $\tilde{\tens{s}}_1(\xt;\vtheta)$ and $\tilde{\tens{s}}_2(\xt;\vtheta)$ jointly with \cref{eq:l_joint_sample}. We use $\sigma=0.15$ for $q_{\sigma}(\xt|\x)$, and train the models using Adam optimizer until convergence. We observe that training $\tilde{\tens{s}}_1(\xt;\vtheta)$ directly with DSM and training  $\tilde{\tens{s}}_1(\xt;\vtheta)$ jointly with \cref{eq:l_joint_sample} have the same empirical performance in terms of estimating $\tilde{\tens{s}}_1$. Thus, we train $\tilde{\tens{s}}_1(\xt;\vtheta)$ jointly with $\tilde{\tens{s}}_2(\xt;\vtheta)$ in our experiments.

\subsection{Convariance diagonal visualizations}
\label{app:covariance}
For both the MNIST and CIFAR-10 models, we use U-Net architectures to parameterize $\tilde{\tens{s}}_1(\xt;\vtheta)$. We also use a similar U-Net architecture to parameterize $\tilde{\tens{s}}_2(\xt;\vtheta)$, except that we modify the output channel size to match the rank $r$ of $\bm{\beta}(\xt;\vtheta)$. We use $r=50$ for $\bm{\beta}(\xt;\vtheta)$ for both MNIST and CIFAR-10 models. We use the U-Net model implementation from this repository \url{https://github.com/ermongroup/ncsn}. We consider noise scales $\sigma=0.3, 0.5, 0.8, 1.0$ for MNIST and $\sigma=0.3, 0.5, 0.8$ for CIFAR-10. We train the models jointly until convergence with \cref{eq:l_joint_sample}, using learning rate $0.0002$ with Adam optimizer. The models are trained on the corresponding training sets on 2 GPUs.

\subsection{Full convariance visualizations}
\label{app:pca}
We use U-Net architectures to parameterize $\tilde{\tens{s}}_1(\xt;\vtheta)$. We also use a similar U-Net architecture to parameterize $\tilde{\tens{s}}_2(\xt;\vtheta)$, except that we modify the output channel size to match the rank $r$ of $\bm{\beta}(\xt;\vtheta)$. We use $r=50$ for $\bm{\beta}(\xt;\vtheta)$ for this experiment. We use the U-Net model implementation from this repository \url{https://github.com/ermongroup/ncsn}. We train the models until convergence, using learning rate $0.0002$ with Adam optimizer. The models are trained on the corresponding training set on 2 GPUs.
We provide extra eigenvector visualizations for \cref{fig:pca} in \cref{fig:app_pca_80,fig:app_pca_24}.

\newpage
\begin{figure}[H]
     \centering
     \includegraphics[width=0.8\textwidth]{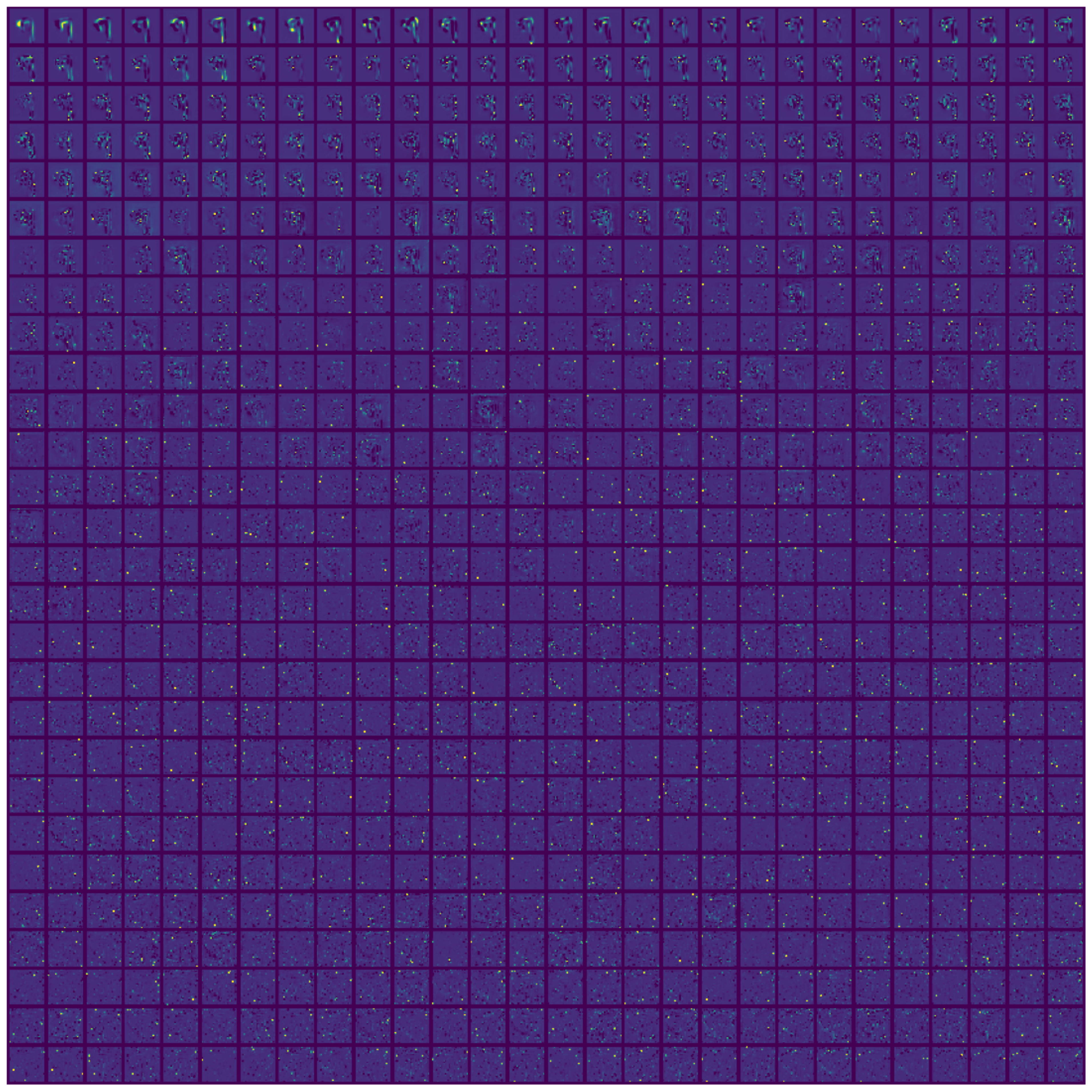}
    \caption{Eigenvectors (sorted by eigenvalues) of $\operatorname{Cov}[\x\mid \xt]$ estimated by $\tilde{\tens{s}}_2(\xt;\vtheta)$ on MNIST (more details in \cref{sec:uncertainty}).
    }
    \label{fig:app_pca_80}
\end{figure}
\newpage
\begin{figure}[H]
     \centering
     \includegraphics[width=0.8\textwidth]{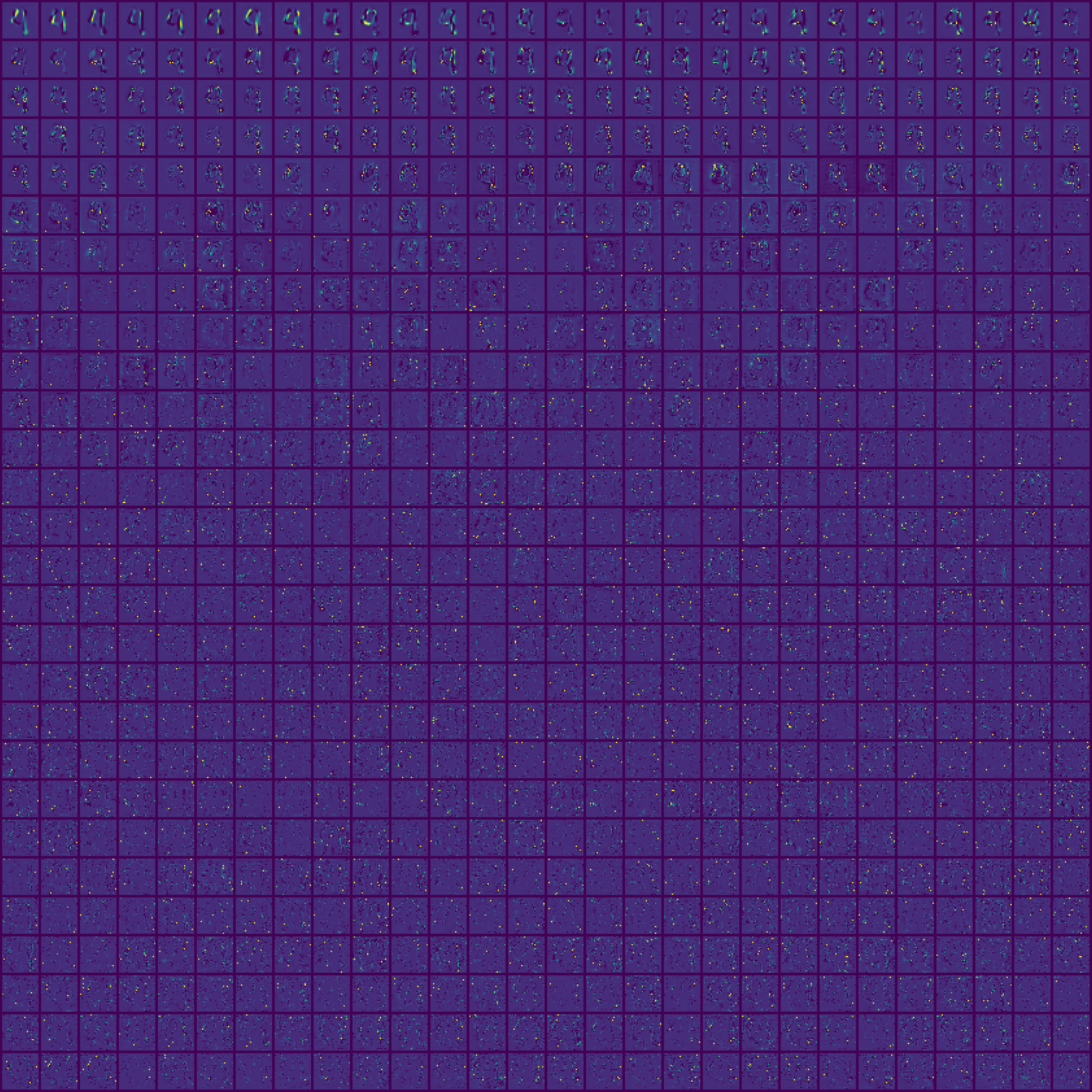}
    \caption{Eigenvectors (sorted by eigenvalues) of $\operatorname{Cov}[\x\mid \xt]$ estimated by $\tilde{\tens{s}}_2(\xt;\vtheta)$ on MNIST (more details in \cref{sec:uncertainty}).}
    \label{fig:app_pca_24}
\end{figure}
\section{Ozaki sampling}
\label{app:ozaki_sampling}
This section provides more details on \cref{sec:sampling_with_s2}.
\subsection{Synthetic datasets}
\label{app:ozaki_synthetic}
This section provides more details on \cref{sec:ozaki_synthetic}.
We use a 3-layer MLP model for both $\tilde{\tens{s}}_1(\xt;\vtheta)$ and $\tilde{\tens{s}}_2(\xt;\vtheta)$. Since we only need the diagonal of the second order score, we parameterize $\tilde{\tens{s}}_2(\xt;\vtheta)$ with a diagonal model (\ie with only $\bm{\alpha}(\xt;\vtheta)$) and optimize the models jointly using \cref{eq:joint_objective_diagonal}. 
We use $\sigma=0.1$ during training so that the noise perturbed distribution $q_{\sigma}(\xt)$ is close to $p_{\text{data}}(\x)$. 
The models are trained with Adam optimizer with learning rate $0.001$.

Given the trained models, we perform a parameter search to find the optimal step size for both Langevin dynamics and Ozaki sampling.
We also observe that Ozaki sampling can use a larger step size  than Langevin dynamics, which is also discussed in ~\cite{dalalyan2019user}. 
We observe that the optimal step size for Ozaki sampling is $\eps=5$ on Dataset 1 and $\eps=6$ on Dataset 2, while the optimal step size for Langevin dynamics is $\eps=0.5$ on Dataset 1 and $\eps=2$ on Dataset 2.
We also explore using the same setting of Ozaki sampling for Langevin dynamics (\ie we use the optimal step size of Ozaki sampling and the same number of iterations). We present the results in \cref{fig:app:2d_sampling_dataset1}. We observe that the optimal step size for Ozaki sampling is too large for Langevin dynamics, and does not allow Langevin dynamics to generate reasonable samples.
We also find that Ozaki sampling can converge using fewer iterations than Langevin dynamics even when using the same step size (see \cref{fig:2d_sampling_dataset2}). 
All the experiments in this section are performed using 1 GPU.

\begin{figure}[H]
     \centering
     \begin{subfigure}[b]{0.24\textwidth}
         \centering
         \includegraphics[width=\textwidth]{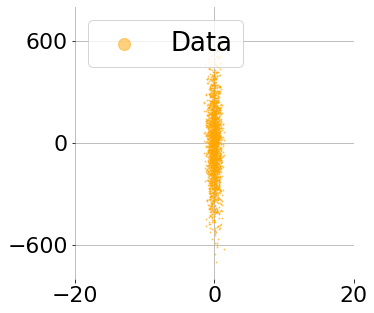}
         \caption{Dataset 1}
         \label{fig:app:2d_sampling_dataset1_1}
     \end{subfigure}
     \begin{subfigure}[b]{0.24\textwidth}
         \centering
         \includegraphics[width=\textwidth]{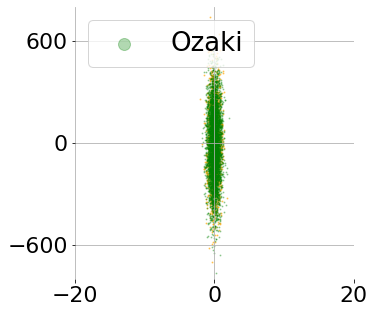}
         \caption{Ozaki ($5\times10^{3}$)}
     \end{subfigure}
     \begin{subfigure}[b]{0.24\textwidth}
         \centering
         \includegraphics[width=\textwidth]{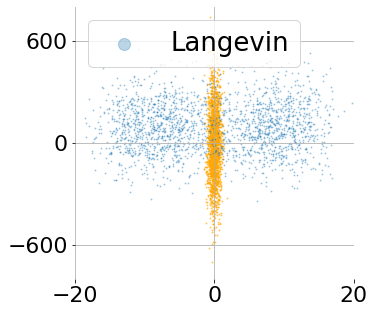}
         \caption{Langevin ($5\times10^{5}$)}
     \end{subfigure}
     \begin{subfigure}[b]{0.24\textwidth}
         \centering
         \includegraphics[width=\textwidth]{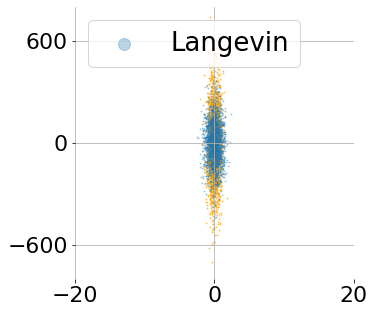}
         \caption{Langevin ($3\times10^{4}$)}
     \end{subfigure}
     \vfill
     \begin{subfigure}[b]{0.24\textwidth}
         \centering
         \includegraphics[width=\textwidth]{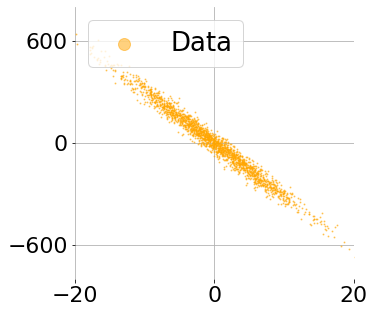}
         \caption{Dataset 2}
         \label{fig:app:2d_sampling_dataset1_2_1}
     \end{subfigure}
     \begin{subfigure}[b]{0.24\textwidth}
         \centering
         \includegraphics[width=\textwidth]{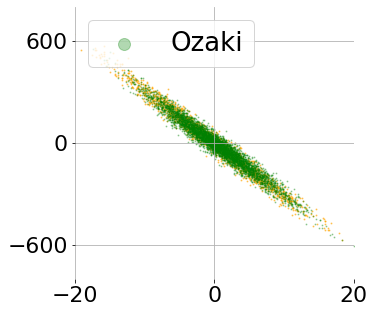}
         \caption{Ozaki ($3\times10^{3}$)}
         \label{fig:app:2d_sampling_dataset1_2}
     \end{subfigure}
     \begin{subfigure}[b]{0.24\textwidth}
         \centering
         \includegraphics[width=\textwidth]{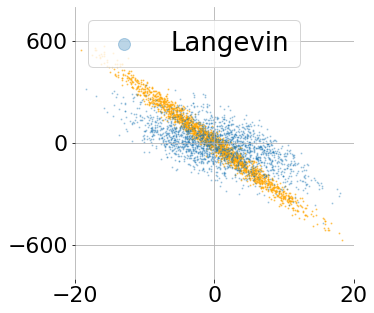}
         \caption{Langevin ($3\times10^{3}$)}
         \label{fig:app:2d_sampling_dataset1_3}
     \end{subfigure}
     \begin{subfigure}[b]{0.24\textwidth}
         \centering
         \includegraphics[width=\textwidth]{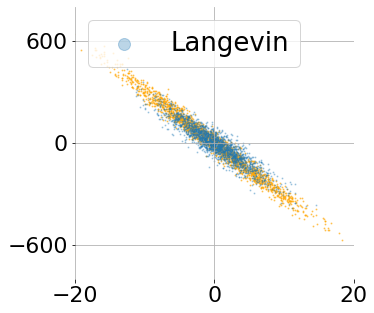}
         \caption{Langevin ($9\times10^{3}$)}
         \label{fig:app:2d_sampling_dataset1_4}
     \end{subfigure}
    \caption{Sampling 2-D synthetic data with score functions. The number in the parenthesis stands for the number of iterations used for sampling. We observe that Ozaki obtains more reasonable samples using 1/6 or 1/3 iterations compared to Langevin dynamics. The second column uses the optimal step size for Ozaki, and the third column uses the same step size and setting for Langevin dynamics. The fourth column uses the optimal step size for Langevin dynamics.}
    \label{fig:app:2d_sampling_dataset1}
\end{figure}

\subsection{MNIST}
\label{app:ozaki_mnist}
We use the U-Net implementation from this repository \url{https://github.com/ermongroup/ncsn}.
We train the models until convergence on the corresponding MNIST training set using learning rate $0.0002$ and Adam optimizer. We use 2 GPUs during training.
As shown in \cite{song2019generative}, sampling images from score-based models trained with DSM is challenging when $\sigma$ is small due to the ill-conditioned estimated scores in the low density data region.
In our experiments, we use a slightly larger $\sigma=0.5$ to avoid the issues of training and sampling from $\tilde{\tens{s}}_1(\xt;\vtheta)$ 
as discussed in~\cite{song2019generative}.
We train the $\tilde{\tens{s}}_1(\xt;\vtheta)$ and $\tilde{\tens{s}}_2(\xt;\vtheta)$ jointly with \cref{eq:joint_objective}. 

For experiments on class label changes, we select 10 images with different class labels from the MNIST test set. For each of the image, we initialize 1000 chains using it as the initialization for sampling. We consider two sampling methods Langevin dynamics and Ozaki method in this section.  
For the generated images, we first denoise the sampled results with \cref{eq:twiddie_formula} and then use a pretrained classifier, which has 99.5\% accuracy on MNIST test set classification, to classify the labels of the generated images in \Figref{fig:mnist_fix_init_1}.

\section{Broader Impact}

Our work provides a way to approximate high order derivatives of the data distribution. The proposed approach allows for applications such as uncertainty quantification in denoising and improved sampling speed for Langevin dynamics.
Uncertainty quantification in denoising could be useful for medical image diagnosis.
Higher order scores might provide new insights into detecting adversarial or out-of-distribution examples, which are important real-world applications.
Score-based generative models can have both positive and negative impacts depending on the application. For example, score-based generative models can be used to generate high-quality images that are hard to distinguish from real ones by humans, which could be used to deceive humans in malicious ways ("deepfakes").

\end{document}